\newcommand{\eqname}[1]{\tag*{#1}}
\begin{document}

\title{Improved Linear Embeddings via Lagrange Duality%\thanks{Grants or other notes
%about the article that should go on the front page should be
%placed here. General acknowledgments should be placed at the end of the article.}
}
%\subtitle{Do you have a subtitle?\\ If so, write it here}

%\titlerunning{Short form of title}        % if too long for running head

\author{Kshiteej Sheth$^{1}$         \and
        Dinesh Garg$^{1}$ \and \\
		Anirban Dasgupta$^{1}$  \\ \{kshiteej.sheth,dgarg,anirbandg\}@iitgn.ac.in
}

%\authorrunning{Short form of author list} % if too long for running head

\institute{Corresponding Author : Kshiteej Sheth %\at
             \\
              Tel.: +91-9825607148
          \\
              \email{kshiteej.sheth@iitgn.ac.in}           %  \\
%             \emph{Present address:} of F. Author  %  if needed
\\
\\
$^1$Indian Institute of Technology, Palaj, Gandhinagar-382355, Gujarat, India. 
}
\date{Received: date / Accepted: date}
% The correct dates will be entered by the editor

\maketitle

\begin{abstract}
Near isometric orthogonal embeddings to lower dimensions are a fundamental tool in data science and machine learning.
In this paper, we present the construction of such embeddings that minimizes the maximum distortion for a given set of points. 
We formulate the problem as a non convex constrained optimization problem. We first construct a primal relaxation and then use the theory
of Lagrange duality to create dual relaxation. We also suggest a polynomial time algorithm based on the theory of convex optimization to solve the dual relaxation provably. We provide a theoretical upper bound on the approximation guarantees for our algorithm, which depends only on the spectral properties of the dataset. We experimentally demonstrate the superiority of our algorithm compared to baselines in terms of the scalability and the ability to achieve lower distortion.\keywords{Near isometric embeddings \and Dimensionality reduction \and Convex and Non-Convex optimization \and Convex relaxations}
% \PACS{PACS code1 \and PACS code2 \and more}
% \subclass{MSC code1 \and MSC code2 \and more}
\end{abstract}
\newpage
\section{Introduction}
One of the fundamental tasks in data science, machine learning, and signal processing applications involving high dimensional 
data is to embed the data into lower dimensional spaces while preserving pairwise distances (aka similarities) between the data points. 
Applications include clustering~\cite{Badoiu02}, neighborhood preserving projections, hashing~\cite{Indyk98}, etc.  
%Many applications like clustering etc require that the distance (the similarity) between points to be preserved to the extent possible in the lower dimension. 
Such embeddings are called as {\em Near isometric embeddings}. Formally, given a set of data points 
$\{\boldsymbol{u_i}\}_{i=1}^r$ where $\boldsymbol{u_i} \in \mathbb{R}^d$ $\forall$ $i \in [r]$, our goal is to find a function 
$f:\mathbb{R}^d\mapsto \mathbb{R}^k$, where $k \ll d$, which satisfies the following inequality:\\
\begin{equation}
1- \epsilon \leq \frac{\|f(\boldsymbol{u_i}) - f(\boldsymbol{u_j}) \|_2^2}{\|\boldsymbol{u_i} - \boldsymbol{u_j}\|_2^2} \leq 1 +  \epsilon \hspace{0.3cm} \forall i,j \in [r] \label{JL_Inequality}
\end{equation}
for a small enough $\epsilon$. Intuitively, it is clear that there is a trade-off between projected dimension ($k$) and maximum distortion ($\epsilon$). 

A widely celebrated result of \cite{Johnson84} says that for a given set of points, an appropriately scaled random linear transformation from $\mathbb{R}^d$ to $\mathbb{R}^k$ can achieve a distortion of $\epsilon$ for $k = \mathcal{O}(\log(r)/\epsilon^2)$, with high probability. 
Such {\em data oblivious} linear embeddings are popularly known as {\em JL embeddings}. 
Further, it was shown by \cite{Noga03,jayram2013optimal} that such bounds are tight. 
That means, there exists a set of $r$ points that necessarily require $\Omega(\log(r)/\epsilon^2)$ dimensions in order to be embedded with distortion at most $\epsilon$.  This finding leaves an open question of whether one can project the data into a further lower dimensional space by exploiting the geometry of the dataset,  while having distortion no more than $\epsilon$.  
%Furthermore, we aim to give a simple deterministic algorithm to do so. 
 Principal Component Analysis (PCA), while being
the de-facto data dependent method for constructing low dimensional representations,
minimizes only the average distortion over the entire set of points. Individual data points can still have an arbitrary distortion under PCA. 

%That means, one should be able to  construct data dependent projections that can achieve the same distortion $\epsilon$ in dimension lower than what is suggested by the JL embeddings.

Motivated by these observations, our goal, in this paper, is to address the question of how one can construct a data-dependent orthogonal linear embedding with minimal distortion. An orthonormal linear embedding corresponds to orthogonal projection of $\mathbb{R}^d$ onto a $k$-dimensional subspace. When the distortion is under some specified threshold, we call such an embedding as {\em near isometric orthogonal linear embedding}. One immediate consequence of 
the orthogonal projection is that the upper bound of inequality (\ref{JL_Inequality}) becomes trivially $1$ (an orthogonal projection can never increase the length of a vector). Thus, we only need to make sure that the lower bound is as close to $1$ as possible. 

\cite{Indyk13} formulated the same problem as a non-convex optimization problem and suggested a semidefinite programming (SDP) based relaxation. Further, they proposed two rounding schemes for the SDP solution so that it yield an approximate solution of the original problem. Their method does have a provable guarantee (either a $\mathcal{O}(k+1)$ approximation, or a bi-criteria one) but is computationally quite expensive as observed in our experimentations. This is potentially due to the large number of constraints in the corresponding SDP relaxation. In light of this, our contributions in this paper can be summarized as follows.
\begin{enumerate}
\item We take an alternative approach to solve the problem of near isometric orthogonal linear embedding, which is a non-convex optimization problem (referred to as the {\em primal} problem). In section 2, we first develop a relaxed version of this primal problem (referred to as the {\em relaxed primal} problem) and then construct its Lagrangian dual (referred to as {\em relaxed dual} problem). Our relaxed primal problem still remains a non-convex problem but our relaxed dual becomes a convex problem. We solve the relaxed dual problem by projected gradient method and find a dual optimal solution.
\item In section 2.2, we show that the solution of the relaxed primal problem corresponding to the relaxed dual optimal solution is indeed a feasible and an approximate solution of the original primal problem. 
\item In section 2.3, we prove a theoretical upper bound on the ratio of distortion achieved by our algorithm to  the optimal distortion. This bound depends on the spectral properties of the given dataset. Using this theoretical bound, we argue that for the {\em well behaved} dataset, our distortion always remains within 2 times the optimal distortion. Also, our distortion starts approaching close to the optimal distortion as the rank of the dataset starts increasing from 2. 
\item Our relaxed dual problem has much less constraints (much less than the primal), which makes solving the dual relaxation computationally efficient. 
\item We experimentally verify that our method not only takes lower time but also achieves lower distortion compared to almost all the standard baselines including the recent one proposed by \cite{Indyk13}.
\end{enumerate}
A few variants of near-isometric linear embeddings have also been studied. For example, see \cite{Jerryluo} and \cite{Adagio}. In such cases, however, the linear transformation is ``near-isometric''  only (and not necessarily orthogonal) on the input data points, and does not come with any guarantee about the stretch (lower or upper bound) for out of sample data points.  

\section{Duality for Linear Embeddings}
Let $\{\boldsymbol{u_i}\}_{i=1}^r$ be a given dataset where $\boldsymbol{u_i} \in \mathbb{R}^d$ $\forall$ $i \in [r]$. As mentioned in the previous section, our goal is to orthogonally project (aka embed) these data into a subspace of dimension $k$. Because the projection is linear as well as orthogonal, the required condition for near isometric embedding becomes
\[1- \epsilon \leq \frac{\|\mathbf{P}(\boldsymbol{u_i} - \boldsymbol{u_j}) \|_2^2}{\|\boldsymbol{u_i} - \boldsymbol{u_j}\|_2^2}\;\;\;\forall i < j \in[r] \] 
where, $\mathbf{P}$ is the required linear map. 
In light of this inequality, we do the following - compute normalized pairwise differences given by $\left(\boldsymbol{u_i} - \boldsymbol{u_j}\right)/\lVert\left(\boldsymbol{u_i} - \boldsymbol{u_j}\right)\rVert$ for every pair $(i,j)$, where $i <j; (i,j) \in[r]$, and then orthogonally project them to preserve their squared lengths. This way, we get $r \choose 2$ vectors each being unit length. If we let $n = {r \choose 2}$ then, we can denote such vectors by $\boldsymbol{x}_1, \boldsymbol{x}_2, \ldots, \boldsymbol{x}_n$ and our goal becomes to orthogonally project them so as to preserve their squared lengths.
%Let set $S$ comprises these $n$ unit length vectors in a $d$-dimensional Euclidean space, denoted by  , which are pairwise unit norm differences. 
For this, we let $\boldsymbol{v}_1\neq0, \boldsymbol{v}_2\neq0,\ldots, \boldsymbol{v}_k\neq0$ be some orthogonal basis vectors of a $k$-dimensional subspace ${\mathcal{S}}_k$ of $\mathbb{R}^d$.  Further, let $\mathbf{V}$ be a $d \times k$ matrix whose columns are given by the orthogonal basis vectors $\boldsymbol{v}_1, \boldsymbol{v}_2,\ldots, \boldsymbol{v}_k$. Let  $\boldsymbol{\tilde{x}}_i \in {\mathcal{S}}_k$ be the vector obtained by projecting $\boldsymbol{x}_i$ orthogonally into the space ${\mathcal{S}}_k$. Then, the problem of pushing the lower bound in inequality (\ref{JL_Inequality}) as close to $1$ as possible is equivalent to minimizing a real number $\text{\LARGE{$\epsilon$}}$ such that the distortion of each of the $x_i$'s after projection is less than $\text{\LARGE{$\epsilon$}}$. Here, the distortion is given by the squared length of the residual vector $(\boldsymbol{{x}}_i-\boldsymbol{\tilde{x}}_i)$, which would be equal to $1-\lVert \boldsymbol{\tilde{x}}_i\rVert^2$. Formally, it can be stated as the following optimization problem which we refer to as the \textbf{Primal} problem.
\rule{\columnwidth}{0.5pt}
%\vspace{-0.1cm}
 \begin{align}
 &\underset{\boldsymbol{v}_1\neq0,\ldots,\boldsymbol{v}_k \neq0, \text{\large{$\epsilon$}}}{\text{Minimize }}\text{\LARGE{$\epsilon$}}\nonumber\\
 & \text{subject to}\ \ 1- \lVert \mathbf{V}^{\top} \boldsymbol{x}_i \rVert_2^2\le \text{\LARGE{$\epsilon$}},\;\forall i =1,\ldots, n\nonumber\\
& \boldsymbol{v}_j^{\top}\boldsymbol{v}_j = 1,\;\forall j =1,\ldots, k\nonumber\\
& \boldsymbol{v}_j^{\top}\boldsymbol{v}_m = 0,\;\forall j  \neq m \eqname{(Primal)} \label{Primal}
 \end{align} 
%\vspace{-0.5cm}
\rule{\columnwidth}{0.5pt}
The last two constraints force the matrix $\mathbf{V}$ to be orthonormal. The first constraint, on the other hand, identifies the vector for whom $1- \lVert \boldsymbol{\tilde{x}}_i \rVert^2$ is maximum. 
 %or equivalently $\lVert \boldsymbol{\tilde{x}}_i \rVert^2$ is minimum. 
 The objective function tries to minimize  $1- \lVert \boldsymbol{\tilde{x}}_i \rVert^2$ for such a vector. 
 Observe, whenever $\mathbf{V}$ is an orthogonal matrix, we have $\lVert \boldsymbol{\tilde{x}}_i \rVert^2 = \lVert \mathbf{V}^{\top} \boldsymbol{x}_i \rVert^2 \le 1$ for each $i=1,\ldots,n$. 
%which is equivalent to maximizing  the minimum length (or minimum squared length) of any projected vector.     
%\begin{eqnarray}
%\lVert \boldsymbol{x}_i - {\boldsymbol {\tilde {x}}}_i \rVert_2^2 &=& 1 - \lVert\ {\boldsymbol {\tilde {x}}}_i \rVert_2^2 \nonumber\\
%&=& 1- \lVert \mathbf{V}\mathbf{V}^{\top} \boldsymbol{x}_i \rVert_2^2
%\end{eqnarray}
%$In view of the above relation, Min-Max PCA can be expressed by means of the following equivalent formulation. We call this as {\em Primal} problem for the reasons that would become clear soon.

Let $p^*$ be the optimal value of the above problem. It is straightforward  to verify that $0 \le p^* \le 1$. 
\begin{comment}

\subsection{Near Isometric Orthogonal Linear Embeddings}

In this section, we discuss our approach to approximately solve the \ref{Primal} problem where we formulate and solve the Lagrangian dual of its relaxed version and the dual turns out to be a convex program. Our Lagrangian duality based relaxation of the non-convex \ref{Primal} problem has much less number of constraints and that allows us to come up with a computationally efficient algorithm for solving the \ref{Primal} problem. The details of our approach is given in the next section. 
\end{comment}
\subsection{Lagrangian Dual}
The \ref{Primal} problem is a non-convex optimization problem as the feasible region forms a non-convex set. So we cannot hope to exactly solve this problem efficiently and hence we aim for approximate solutions.

For this, we momentarily ignore the the last equality constraint in \ref{Primal} problem, which enforces orthogonality of the $v_j$'s, and consider the following {\em Relaxed Primal} problem. We will later prove that inspite of this relaxation, we will reach a solution in which the vectors are indeed orthogonal, thereby satisfying the constraints of the \ref{Primal} problem and justifying the effectiveness of our approach.\\
\rule{\columnwidth}{0.5pt}
\vspace{-0.5cm}
\begin{align*}
&\underset{\boldsymbol{v}_1\neq0,\ldots,\boldsymbol{v}_k\neq 0 , \text{$\epsilon$}}{\text{Minimize }} \text{\LARGE{$\epsilon$}}\\
& \text{subject to} \ \ 1- \lVert \mathbf{V}^{\top} \boldsymbol{x}_i \rVert_2^2\le \text{\LARGE{$\epsilon$}},\;\forall i =1,\ldots, n\nonumber\\
& \boldsymbol{v}_j^{\top}\boldsymbol{v}_j = 1,\;\forall j =1,\ldots, k \eqname{(Relaxed Primal)} \label{Relaxed_Primal}
\end{align*} 
\rule{\columnwidth}{0.5pt}
The following lemma is going to be useful in subsequent developments, we omit the proof as it is fairly straightforward.
\begin{lemma} \label{lemma1}Let $\widehat{p^*}$ is the optimal value of the \ref{Relaxed_Primal} problem. Then, followings hold true.
\begin{enumerate}[label=(\Alph*)]
\item $\widehat{p^*} \le p^* \le 1$.
\item If an optimal solution of the \ref{Relaxed_Primal} problem satisfies the constraints of the \ref{Primal} problem then that solution must be an optimal solution for the  \ref{Primal} problem also.
\end{enumerate}
\end{lemma}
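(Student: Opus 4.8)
The plan is to prove both parts by elementary comparison of feasible regions, since the Relaxed Primal is obtained from the Primal merely by deleting the orthogonality constraints $\boldsymbol{v}_j^\top\boldsymbol{v}_m = 0$ for $j\neq m$.

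For part (A), I would first observe that every feasible point $(\boldsymbol{v}_1,\ldots,\boldsymbol{v}_k,\epsilon)$ of the \ref{Primal} problem is automatically a feasible point of the \ref{Relaxed_Primal} problem, because the Relaxed Primal has a strictly smaller set of constraints (it retains the distortion constraints and the unit-norm constraints, and drops only the pairwise orthogonality conditions). Hence the feasible region of \ref{Primal} is contained in that of \ref{Relaxed_Primal}, and minimizing the same objective $\epsilon$ over a larger set can only decrease (or keep equal) the optimal value; therefore $\widehat{p^*}\le p^*$. The bound $p^*\le 1$ was already noted in the text (take $\mathbf{V}$ to be any $k$ orthonormal vectors, so that $\lVert\mathbf{V}^\top\boldsymbol{x}_i\rVert^2\ge 0$ gives $1-\lVert\mathbf{V}^\top\boldsymbol{x}_i\rVert^2\le 1$), so chaining the inequalities yields $\widehat{p^*}\le p^*\le 1$. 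One should also remark that both optima are attained: the feasible sets are nonempty and, after noting the objective is bounded below by $0$ and the constraint region for the $\boldsymbol{v}_j$'s is compact (a product of unit spheres), a standard compactness argument applies — though for the statement of the lemma only the inequality is needed.

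For part (B), suppose $(\boldsymbol{v}_1^\star,\ldots,\boldsymbol{v}_k^\star,\epsilon^\star)$ is an optimal solution of \ref{Relaxed_Primal} that happens to satisfy the additional constraints $\boldsymbol{v}_j^{\star\top}\boldsymbol{v}_m^\star = 0$ for all $j\neq m$. Then this point is feasible for \ref{Primal}, so by definition of $p^*$ we get $p^*\le \epsilon^\star = \widehat{p^*}$. Combined with $\widehat{p^*}\le p^*$ from part (A), this forces $p^* = \widehat{p^*} = \epsilon^\star$, so the point in question achieves the optimal value of \ref{Primal} while being feasible for it, i.e. it is an optimal solution of \ref{Primal}.

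There is essentially no hard step here; the only thing to be slightly careful about is the direction of the feasible-region inclusion (relaxation enlarges the feasible set, hence lowers the minimum) and the fact that dropping constraints cannot turn an infeasible problem feasible in the wrong direction — both feasible regions are manifestly nonempty, so no degenerate case arises. This is why the paper defers the proof as routine.
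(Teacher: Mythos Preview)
Your proposal is correct and matches the paper's treatment: the paper itself omits the proof as ``fairly straightforward,'' and the feasible-region inclusion argument you give (relaxation enlarges the feasible set, hence lowers the minimum; a relaxed optimum that is feasible for the original must be optimal there) is exactly the routine reasoning the authors have in mind.
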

Note, the constraint $\boldsymbol{v}_j^{\top}\boldsymbol{v}_j = 1$ forces both \ref{Primal} as well as \ref{Relaxed_Primal} problems to be non-convex. The reason being following - {\em any norm is a convex function and the level set of a norm is a non-convex set}.  This motivates us to work with the Lagrangian dual of the \ref{Relaxed_Primal} problem and develop a strategy to get an approximate solution of the original \ref{Primal} problem.
%%%%%%%%%%%%%%%%%%%%%%%%%%%%%%%%%%%%%%%%%%%%
%-------------------------------AJLT Dual Formulation-------------------------------------------
%%%%%%%%%%%%%%%%%%%%%%%%%%%%%%%%%%%%%%%%%%%%
  \subsection{Dual of Relaxed Primal} 
The Lagrangian dual for \ref{Relaxed_Primal} will always be a convex program irrespective of convexity of the \ref{Relaxed_Primal}. For this, we write down the Lagrangian function of \ref{Relaxed_Primal} as follows:
\begin{align}
& L(\boldsymbol{v}_1,\ldots,\boldsymbol{v}_k,\;\text{\LARGE{$\epsilon$}},\; \lambda_1,\ldots,\lambda_n,\mu_1,\ldots,\mu_k) \nonumber \\
& = \text{\LARGE{$\epsilon$}} + \sum_{i=1}^{n} \lambda_i \left(1- \lVert \mathbf{V}^{\top} \boldsymbol{x}_i \rVert_2^2 - \text{\LARGE{$\epsilon$}}\right) 
+ \sum_{j=1}^{k} \mu_j \left(\boldsymbol{v}_j^{\top} \boldsymbol{v}_j - 1\right)  
%\label{Lagrangian} \eqname{(Lagrangian)}
\end{align}
Observe, $\boldsymbol{v}_1,\ldots,\boldsymbol{v}_k,\;\text{\LARGE{$\epsilon$}}$ are the primal variables and
$\lambda_1,\ldots,\lambda_n,\mu_1,\ldots,\mu_k$ are the dual variables. The dual function is given by
\begin{align*}
g(\boldsymbol{\lambda},\boldsymbol{\mu})=\underset{\boldsymbol{v}_1\neq,\ldots,\boldsymbol{v}_k\neq0,\text{\large{$\epsilon$}}}{\text{argmin}} L(\boldsymbol{v}_1,\ldots ,\boldsymbol{v}_k,\text{\LARGE{$\epsilon$}},\lambda_1,\ldots \lambda_n,\mu_1,\ldots,\mu_k) 
\end{align*}
For any set of values $\{\lambda_i, \lambda_i \ge 0\}$, we define matrix 
$M=\sum_{i=1}^{n} \lambda_i \boldsymbol{x}_i\boldsymbol{x}_i^{\top}=\mathbf{X}^{\top}diag(\lambda_1,\ldots,\lambda_n) \mathbf{X}$.
The following lemma characterizes the dual solution in terms of this matrix $M$. 

%--------------------------------------------------------------------------------------------
\begin{lemma} \label{dual_lemma} Define $M=\sum_{i=1}^{n} \lambda_i \boldsymbol{x}_i\boldsymbol{x}_i^{\top}=\mathbf{X}^{\top}diag(\lambda_1,\ldots,\lambda_n) \mathbf{X}$. Then for any given values of dual variables $(\lambda_1,\ldots,\lambda_n,\mu_1,\ldots,\mu_k)$, $\lambda_i \ge 0$, we have
 \begin{align*}
 g(\boldsymbol{\lambda},\boldsymbol{\mu}) = \begin{cases}
                                              1-\sum\nolimits_{j=1}^{k} \mu_j :\text{If }\sum_{i=1}^{n}\lambda_i=1 \text{ and } \{\mu_j\}_{j=1}^{k}\\
                                              \ \text{ are the} \text{ top-$k$ eigenvalues of } M\\
                                              -\infty \text{ or  undefined}: \text{Otherwise}
                                            \end{cases}
 \end{align*}   
 Furthermore,the top-$k$ eigenvectors of the matrix $M$ are the minimizers of Lagrangian function. 
 \end{lemma}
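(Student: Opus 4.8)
The plan is to compute the dual function $g(\boldsymbol{\lambda},\boldsymbol{\mu})$ directly from its definition as the infimum of the Lagrangian $L$ over the primal variables $\boldsymbol{v}_1,\ldots,\boldsymbol{v}_k$ and $\text{\LARGE{$\epsilon$}}$. First I would group the terms of $L$ by the variable they involve. The coefficient of $\text{\LARGE{$\epsilon$}}$ is $1-\sum_{i=1}^n\lambda_i$, so minimizing over the free scalar $\text{\LARGE{$\epsilon$}}\in\mathbb{R}$ immediately forces the case split: if $\sum_i\lambda_i\neq 1$ the infimum over $\text{\LARGE{$\epsilon$}}$ is $-\infty$, and otherwise the $\text{\LARGE{$\epsilon$}}$ terms vanish entirely. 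This disposes of one half of the claimed formula.

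Next, assuming $\sum_i\lambda_i=1$, I would rewrite the remaining $\boldsymbol{v}$-dependent part. Using $\lVert\mathbf{V}^\top\boldsymbol{x}_i\rVert_2^2=\sum_{j=1}^k(\boldsymbol{v}_j^\top\boldsymbol{x}_i)^2$ and swapping the order of summation, $\sum_{i}\lambda_i\lVert\mathbf{V}^\top\boldsymbol{x}_i\rVert_2^2=\sum_{j=1}^k\boldsymbol{v}_j^\top\big(\sum_i\lambda_i\boldsymbol{x}_i\boldsymbol{x}_i^\top\big)\boldsymbol{v}_j=\sum_{j=1}^k\boldsymbol{v}_j^\top M\boldsymbol{v}_j$. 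Hence, after using $\sum_i\lambda_i=1$, the Lagrangian reduces to $1-\sum_{j=1}^k\mu_j+\sum_{j=1}^k\big(\mu_j\,\boldsymbol{v}_j^\top\boldsymbol{v}_j-\boldsymbol{v}_j^\top M\boldsymbol{v}_j\big)=1-\sum_{j=1}^k\mu_j+\sum_{j=1}^k\boldsymbol{v}_j^\top(\mu_j I-M)\boldsymbol{v}_j$. So the problem decouples across $j$: for each $j$ we must minimize the quadratic form $\boldsymbol{v}_j^\top(\mu_j I-M)\boldsymbol{v}_j$ over $\boldsymbol{v}_j\neq 0$ (an unconstrained-in-magnitude direction).

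The key observation is then that $\inf_{\boldsymbol{v}\neq 0}\boldsymbol{v}^\top(\mu_j I-M)\boldsymbol{v}$ equals $0$ if $\mu_j I-M\succeq 0$ (i.e. $\mu_j\ge\lambda_{\max}(M)$, attained at $\boldsymbol{v}_j=0$ in the closure, or strictly speaking the infimum over nonzero vectors is $0$ when $\mu_j\ge\lambda_{\max}$ and is approached), and equals $-\infty$ otherwise (scale a vector in the eigenspace of a negative eigenvalue). A cleaner route, and the one I would actually write, is: the unconstrained infimum of $L$ over all $\boldsymbol{v}_j$ is finite only when $\mu_j I-M\succeq 0$ for every $j$, in which case each infimum is $0$; combined with the requirement that the dual point be in the effective domain, and matching against the stationarity/KKT conditions $(\mu_j I - M)\boldsymbol{v}_j = 0$, this pins the finite value of $g$ to $1-\sum_{j=1}^k\mu_j$. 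To identify which $\mu_j$'s give the relevant (tight) dual point one appeals to maximizing $g$: since we ultimately maximize $1-\sum_j\mu_j$ subject to $\mu_j\ge\lambda_{\max}(M)$ this would collapse, so the correct reading — and what the lemma asserts — is that the $\mu_j$ are constrained to be eigenvalues of $M$ and at the optimum they are the top-$k$ eigenvalues, with the corresponding eigenvectors as the Lagrangian minimizers. I would therefore present the minimizer claim by verifying directly that setting $\boldsymbol{v}_j$ to the top-$k$ eigenvectors of $M$ and $\mu_j$ to the matching eigenvalues makes each term $\boldsymbol{v}_j^\top(\mu_j I-M)\boldsymbol{v}_j$ vanish and renders $L$ stationary in $\boldsymbol{v}_j$.

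The main obstacle is the slightly delicate handling of the constraint $\boldsymbol{v}_j\neq 0$ versus the genuinely unconstrained minimization: strictly, $\inf_{\boldsymbol v\neq 0}\boldsymbol v^\top A\boldsymbol v$ is $0$ whenever $A\succeq0$ but is not attained at a nonzero vector unless $A$ is singular, so I need to argue carefully that the dual function is determined by the boundary case $\mu_j=\lambda_{\min\text{-chosen}}$ eigenvalue of $M$ and that "$\{\mu_j\}$ are the top-$k$ eigenvalues of $M$" is exactly the configuration realizing the supremum of the dual — i.e. I should phrase Lemma~\ref{dual_lemma} as describing $g$ on its effective domain together with the reduction used later for the relaxed dual problem, rather than as a naive pointwise formula. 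Everything else (the summation swap, the decoupling across $j$, the eigenvalue bound for a quadratic form) is routine linear algebra.
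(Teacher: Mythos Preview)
Your decomposition is exactly the paper's: write $L=\epsilon(1-\sum_i\lambda_i)+\sum_i\lambda_i-\sum_j\mu_j-\sum_j\boldsymbol{v}_j^\top(M-\mu_jI)\boldsymbol{v}_j$, dispose of $\epsilon$, then analyze each quadratic form. Where you stall is precisely the point you flagged: your PSD analysis gives finiteness iff $\mu_j\ge\gamma_1:=\lambda_{\max}(M)$ for every $j$, which does \emph{not} single out the top-$k$ eigenvalues, and you end by proposing to reinterpret the lemma rather than prove it as stated.

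The paper closes this gap by taking the open constraint $\boldsymbol{v}_j\neq 0$ seriously and distinguishing ``$-\infty$'' from ``undefined'' (both appear in the lemma statement). If $\mu_j>\gamma_1$ for all $j$, each $\mu_jI-M$ is positive definite, so $\inf_{\boldsymbol{v}_j\neq 0}\boldsymbol{v}_j^\top(\mu_jI-M)\boldsymbol{v}_j=0$ is \emph{not attained} on the open set $\{\boldsymbol{v}_j\neq 0\}$; the paper calls $g$ undefined there. If some $\mu_j<\gamma_1$, scaling along the top eigenvector sends $L\to-\infty$. The only way to have a finite, attained minimum is $\mu_1=\gamma_1$ with $\boldsymbol{v}_1$ the top eigenvector; the paper then repeats the dichotomy for $\mu_2,\ldots,\mu_k$ to obtain the top-$k$ eigenvalues. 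So rather than recasting the lemma, you should invoke non-attainment under $\boldsymbol{v}_j\neq 0$ to eliminate the regime $\mu_j>\gamma_1$; the rest of your write-up then goes through unchanged.
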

 %--------------------------------------------------------------------------------------------
\begin{proof}
In order to get the dual function, we set the gradient of Lagrangian function (with respect to primal variables) to be zero. This gives us the following conditions.
\begin{eqnarray}
{\partial L}/{\partial \boldsymbol{v}_j} &=& 2 \mu_j\boldsymbol{v}_j - 2 \left(\sum\nolimits_{i=1}^{n} \lambda_i \boldsymbol{x}_i\boldsymbol{x}_i^{\top}\right)\boldsymbol{v}_j=0 \label{Lagrangian_Partial1}\\
{\partial L}/{\partial \text{\LARGE{$\epsilon$}}} &=& 1-\sum\nolimits_{i=1}^{n} \lambda_i =0 \label{Lagrangian_Partial2}
\end{eqnarray}
%The above equations mean that $\sum\nolimits_{i=1}^{n} \lambda_i=1$ and each $(\mu_j,\boldsymbol{v}_j)$ is some eigenvalue eigenvector pair of $M$. Now we will show that if $\sum\nolimits_{i=1}^{n} \lambda_i\neq1$ or if $\mu_j$ is not the top $j^{th}$ eigenvalue of $M$ then with some assignment of the primal variables we can drive the lagrangian function to $-\infty$ which does not make sense (\textbf{"Please make changes here and incorporate techinical wording"}).\\
Using the definition of matrix $M$, we first rewrite the expression for Lagrangian as follows.
 \begin{align}
 & L= \text{\LARGE{$\epsilon$}} \left(1-\sum\nolimits_{i=1}^{n} \lambda_i\right) + \sum\nolimits_{i=1}^{n} \lambda_i -\sum\nolimits_{j=1}^{k}\mu_j \nonumber\\
 &  - \sum\nolimits_{j=1}^{k} \boldsymbol{v}_j^{\top} \left(M - \mu_j \mathbf{I}\right)\boldsymbol{v}_j 
 %\label{Lagrangian_alternate} \eqname{(Lagrangian)}
 \end{align}
 The minimum value of the Lagrangian has to satisfy the first order conditions that we get by 
 setting the gradient of Lagrangian function (with respect to primal variables) to be zero. This gives us the following conditions.
\begin{align}
{\partial L}/{\partial \boldsymbol{v}_j} &= 2 \mu_j\boldsymbol{v}_j - 2 \left(\sum\nolimits_{i=1}^{n} \lambda_i \boldsymbol{x}_i\boldsymbol{x}_i^{\top}\right)\boldsymbol{v}_j=0 \label{Lagrangian_Partial1}\\
{\partial L}/{\partial \text{\LARGE{$\epsilon$}}} &= 1-\sum\nolimits_{i=1}^{n} \lambda_i =0 \label{Lagrangian_Partial2}
\end{align}
By looking at these conditions, the following claims follow.
 \begin{enumerate}
 \item From equation \eqref{Lagrangian_Partial2}, $\sum_{i=1}^n\lambda_i = 1$. An alternative argument could be as follows:
 since $\LARGE{\epsilon}$ is unconstrained, if $\sum_{i=1}^n\lambda_i \neq 1$, then it is possible to set $\LARGE{\epsilon}$ such that
  $g(\boldsymbol{\lambda},\boldsymbol{\mu}) = -\infty$.

 \item  Equation~\eqref{Lagrangian_Partial1} implies that for achieving minimum, the primal variables $\boldsymbol{v}_j $ must be an eigenvector of the 
 matrix $M=\sum_{i=1}^{n} \lambda_i \boldsymbol{x}_i\boldsymbol{x}_i^{\top}$ whose corresponding eigenvalue is
 the dual variable $\mu_j$. 
 
 We argue that setting $\mu_j$ to be any value other than the top-$j^{th}$ eigenvalue of the $M$
 leads to $g(\boldsymbol{\lambda},\boldsymbol{\mu})$ being either $-\infty$ or undefined. For this, note that under the condition given by \eqref{Lagrangian_Partial2}, Lagrangian becomes 
 \begin{eqnarray*}
 1-\sum\limits_{j=1}^{k} \mu_j - \sum\limits_{j=1}^{k} \boldsymbol{v}_j^{\top} \left(M - \mu_j \mathbf{I}\right)\boldsymbol{v}_j
 \end{eqnarray*}
Without loss of generality, we can assume that $\mu_1 \ge \mu_2 \ge \ldots \ge \mu_k$. 
 Suppose $\gamma_1\ge \gamma_2 \ge \ldots \ge \gamma_d$ are the eigenvalues of the  the matrix $M$. 
 For contradiction assume that $\mu_1$ is not an eigenvalue of the matrix $M$. Now, let us consider two different cases.
 \begin{enumerate}
 \item {\bf Case of [$\mathbf{\gamma_1  > \mu_1}$]:} In this case, we can assign $\boldsymbol{v}_1 $ to be scaled eigenvector corresponding to the eigenvalue $\gamma_1$ and that would drive the Lagrangian value towards $-\infty$. Therefore, in order to avoid the $-\infty$ value for the dual function, we must have $\mu_1 \ge \gamma_1$.
 \item {\bf Case of [$\mathbf{\mu_1  > \gamma_1}$]:}  Under this scenario, we again consider two subcases:
 \begin{enumerate}
 \item {\bf Subcase of [$\mathbf{\gamma_1  > \mu_k}$]:} In such a case, by the previous argument, we can again drive the Lagrangian to $-\infty$ by assigning $\boldsymbol{v}_k$ appropriately. 
 \item {\bf Subcase of [$\mathbf{\mu_k > \gamma_1}$]:} For this subcase, note that matrix $\left(M - \mu_j \mathbf{I}\right)$ would be a negative definite matrix for each $j \in[k]$ and 
 hence $\boldsymbol{v}_j=0,\;\forall j \in [k]$ will minimize the Lagrangian. However, all $v_j$ cannot simultaneously be zero as per the constraint.
 Hence, in this case, the minimum is not defined within the (open) set of $\{v_j ,\; \forall j\ v_j \neq 0 \}$. 
 
% However, we cannot take all $\boldsymbol{v}_j=0,\;\forall j \in [k]$ because the definition of the dual problem doesn't allow such an assignment. "\textbf{Please add technical wording for infeasibiliy of dual variables here}".      
 \end{enumerate}
 \end{enumerate}
Thus, we can conclude that $\mu_1$ should be equal to $\gamma_1$ in order to make sure that the dual
function is well defined and has a finite value. 
This also means that $v_1$ has to be the top eigenvector of $M$. We can inductively apply the same argument for other $\mu_j$ also to get the desired claim.
\end{enumerate}
\end{proof} 
Lemma \ref{dual_lemma} suggests that  the vectors  $\boldsymbol{v}_1 \neq 0,\ldots, \boldsymbol{v}_k \neq 0$ which minimize Lagrangian must be the top-$k$ eigenvectors of the matrix $M$ and in such a case, the dual function can be given as follows $g(\boldsymbol{\lambda},\boldsymbol{\mu}) = 1- \sum\nolimits_{i=1}^{k} \mu_j$ where, $\mu_j$ is top $j^{th}$ eigenvalue of the matrix $M$. The dual optimization problem for the \ref{Relaxed_Primal}, thus becomes\\
\rule{\columnwidth}{0.5pt}
\vspace{-0.5cm}
\begin{align}
&\underset{\boldsymbol{\lambda},\boldsymbol{\mu}}{\text{Maximize}} \ \ g(\boldsymbol{\lambda},\boldsymbol{\mu}) =1- \sum\nolimits_{j=1}^{k} \mu_j\nonumber \\
&\ \text{subject to }\nonumber\\
& \mu_j \text{ is top $j^{th}$ eigenvalue of } M,\;\forall j=1,\ldots,k\nonumber\\
&\sum\nolimits_{i=1}^{n}\lambda_i = 1\nonumber\\
&\lambda_i \ge 0,\;\forall i =1,\ldots, n
\eqname{(Relaxed Dual)}\label{Dual_Relaxed_Primal}
\end{align} 
\rule{\columnwidth}{0.5pt}
Let $\widehat{d^*}$ be the optimal value of the \ref{Dual_Relaxed_Primal} problem. In what follows, we state a few key observations with regard to the above primal-dual formulation discussed so far. 
%The proof of this lemma has been given in the supplementary material due to the space constraint.
 %--------------------------------------------------------------------------------------------
\begin{lemma}\label{dual_lemma1} For the given \ref{Primal}, \ref{Relaxed_Primal}, and \ref{Dual_Relaxed_Primal} programs, following hold true.
 \begin{enumerate}[label=(\Alph*)]
 \item $\widehat{d^*} \le \widehat{p^*} \le p^* \le 1$
 \item If $\left(\boldsymbol{\lambda}, \boldsymbol{\mu}\right)$ is a feasible solution of \ref{Dual_Relaxed_Primal}, then
 \begin{eqnarray*}
 0 & \le & \mu_j,\;\forall j\in [k]\\
 0 &\le& \sum\nolimits_{j=1}^{k} \mu_k \le Tr(M) =\sum\nolimits_{i=1}^n \lambda_i=1
 \end{eqnarray*}
 \item $\text{Rank}(M)\le \text{Rank}(\mathbf{X}^{\top}\mathbf{X})=\text{Rank}(\mathbf{X})$
 \item Let $(\boldsymbol{\lambda}^*,\boldsymbol{\mu}^*)$ be the point of optimality for \ref{Dual_Relaxed_Primal} problem. Then top-$k$ eigenvectors $\boldsymbol{v}_1^*,\ldots,\boldsymbol{v}_k^*$ of the matrix $M=\sum_{i=1}^{n}\lambda_i^*\boldsymbol{x}_i\boldsymbol{x}_i^{\top}=\boldsymbol{\lambda}^* \mathbf{X}^{\top}\mathbf{X}$ satisfy the following properties.
\begin{enumerate}
\item These vectors minimize the Lagrangian function $L(\boldsymbol{v}_1,\ldots,\boldsymbol{v}_k,\;\text{\LARGE{$\epsilon$}},\; \lambda_1^*,\ldots,\lambda_n^*,\mu_1^*,\ldots,\mu_k^*)$.
\item These vectors form a feasible solution for both \ref{Relaxed_Primal} and \ref{Primal} problems thus allowing us to use vectors obtained from the \ref{Dual_Relaxed_Primal} as approximate solutions of the \ref{Primal}.
\item Let $\text{\LARGE{$\epsilon$}}_{ALG}$ be the \ref{Relaxed_Primal} objective function value (which is also the value for the \ref{Primal} objective function) corresponding to the feasible solution  $\boldsymbol{v}_1^*,\ldots,\boldsymbol{v}_k^*$ then we must have $p^* \le \text{\LARGE{$\epsilon$}}_{ALG} \le 1$.
\end{enumerate}	   
\end{enumerate}
\end{lemma}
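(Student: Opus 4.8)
The plan is to settle the four parts more or less independently; each is a short consequence of weak duality, elementary linear algebra, or the earlier Lemmas~\ref{lemma1} and~\ref{dual_lemma}.

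For part (A) I would just chain three inequalities. Weak Lagrangian duality for the \ref{Relaxed_Primal}--\ref{Dual_Relaxed_Primal} pair gives $\widehat{d^*}\le\widehat{p^*}$, because the value of the dual function at any dual-feasible point is a lower bound on the relaxed primal optimum; Lemma~\ref{lemma1}(A) supplies $\widehat{p^*}\le p^*\le 1$; concatenation gives the chain. For part (B), the point is that for $\lambda_i\ge 0$ the matrix $M=\sum_{i=1}^n\lambda_i\boldsymbol{x}_i\boldsymbol{x}_i^{\top}$ is a nonnegative combination of rank-one PSD matrices, hence PSD, so all its eigenvalues --- in particular the top-$k$ ones $\mu_1,\dots,\mu_k$ --- are $\ge 0$, giving $0\le\mu_j$ for every $j\in[k]$. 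Summing the $k$ largest nonnegative eigenvalues is at most summing all $d$ of them, so $\sum_{j=1}^k\mu_j\le Tr(M)$, and $Tr(M)=\sum_{i=1}^n\lambda_i\,Tr(\boldsymbol{x}_i\boldsymbol{x}_i^{\top})=\sum_{i=1}^n\lambda_i\lVert\boldsymbol{x}_i\rVert_2^2=\sum_{i=1}^n\lambda_i=1$, using that the $\boldsymbol{x}_i$ are unit vectors and the dual-feasibility constraint $\sum_i\lambda_i=1$. For part (C), write $M=\mathbf{X}^{\top}D\,\mathbf{X}$ with $D=diag(\lambda_1,\dots,\lambda_n)$: multiplying $\mathbf{X}$ on the left and right cannot raise its rank, so $\text{Rank}(M)\le\text{Rank}(\mathbf{X})$; combine this with the standard fact that $\mathbf{X}^{\top}\mathbf{X}$ and $\mathbf{X}$ have the same null space, hence $\text{Rank}(\mathbf{X}^{\top}\mathbf{X})=\text{Rank}(\mathbf{X})$.

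For part (D), let $(\boldsymbol{\lambda}^*,\boldsymbol{\mu}^*)$ be optimal for \ref{Dual_Relaxed_Primal} and put $M=\sum_{i=1}^n\lambda_i^*\boldsymbol{x}_i\boldsymbol{x}_i^{\top}$. The first assertion is immediate from Lemma~\ref{dual_lemma}: dual feasibility forces $\sum_i\lambda_i^*=1$ and $\mu_1^*,\dots,\mu_k^*$ to be the top-$k$ eigenvalues of $M$, which is precisely the regime in which that lemma names the top-$k$ eigenvectors of $M$ as the minimizers of the Lagrangian. For the second assertion, the crucial observation is that $M$ is real symmetric (in fact PSD), so by the spectral theorem it has an orthonormal eigenbasis; I would therefore take $\boldsymbol{v}_1^*,\dots,\boldsymbol{v}_k^*$ to be $k$ orthonormal eigenvectors attached to the $k$ largest eigenvalues, resolving any repeated-eigenvalue or rank-deficiency ambiguity by choosing an orthonormal basis within each eigenspace (possible as $k\le d$). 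These are nonzero and satisfy $\boldsymbol{v}_j^{*\top}\boldsymbol{v}_j^*=1$, $\boldsymbol{v}_j^{*\top}\boldsymbol{v}_m^*=0$ for $j\neq m$, hence obey every equality constraint of \ref{Primal} (and a fortiori of \ref{Relaxed_Primal}); and because an orthonormal $\mathbf{V}$ never lengthens a vector, $0\le 1-\lVert\mathbf{V}^{\top}\boldsymbol{x}_i\rVert_2^2\le 1$ for all $i$, so $\text{\LARGE{$\epsilon$}}=\max_i\bigl(1-\lVert\mathbf{V}^{\top}\boldsymbol{x}_i\rVert_2^2\bigr)$ completes a feasible point of \ref{Primal}. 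The third assertion then follows: this $\text{\LARGE{$\epsilon$}}_{ALG}=\max_i\bigl(1-\lVert\mathbf{V}^{\top}\boldsymbol{x}_i\rVert_2^2\bigr)$ is by construction the \ref{Primal} objective at a feasible point, so $p^*\le\text{\LARGE{$\epsilon$}}_{ALG}$ since $p^*$ is the minimum of \ref{Primal}, while $\text{\LARGE{$\epsilon$}}_{ALG}\le 1$ as each term in the maximum lies in $[0,1]$.

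I expect the only step needing real care to be the second assertion of (D): one must be sure the eigenvectors can always be chosen mutually orthogonal, since this is exactly what compensates for the orthogonality constraints dropped in passing to \ref{Relaxed_Primal}, and it is what makes the dual-derived solution feasible for the original \ref{Primal}. That step rests on the spectral theorem for the symmetric matrix $M$; the rest is weak duality and routine bookkeeping.
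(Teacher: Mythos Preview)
Your proposal is correct and follows essentially the same approach as the paper's own proof: weak duality plus Lemma~\ref{lemma1} for (A), positive semidefiniteness of $M$ and the trace computation for (B), standard rank inequalities for (C), and Lemma~\ref{dual_lemma} together with orthonormality of eigenvectors of a symmetric matrix for (D). If anything, you are slightly more careful than the paper in two places --- you spell out the rank argument in (C) rather than invoking ``standard results,'' and in (D) you explicitly appeal to the spectral theorem to guarantee an orthonormal choice of eigenvectors even under eigenvalue multiplicity, which the paper simply asserts as ``trivial'' --- but the structure and substance are the same.
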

 %--------------------------------------------------------------------------------------------
\begin{proof}
\hfill{}\\
{\bf Part (A):} 
\begin{comment}
To prove the first inequality in part (A), observe that matrix $\mathbf{M}$ is a symmetric PSD matrix and hence we must have $\mu_j \ge 0,\;\forall j=1,\ldots,d$. From the expression (\ref{upper_bound_eigenvalues_sum}) and the fact that $\mu_1,\ldots,\mu_k$ are top-$k$ eigenvalues of a symmetric PSD matrix, we can write the following inequality which follows from the fact that average of top-$k$ numbers (out of $n$ given numbers) is at least as high as average of all the $n$ numbers. 
\begin{eqnarray*}
&& 0 \le \frac{k}{d} \le \sum_{j=1}^{k} \mu_j \\
\Rightarrow && \widehat{d^*} = 1 -\sum_{j=1}^{k} \mu_{j}^{*} \le 1-\frac{k}{d} 
\end{eqnarray*}
This proves the first inequality of part (A). 
\end{comment}
The inequalities follow from weak duality theorem and the fact that $\widehat{p^*}$ is the optimal solution of the (Relaxed Primal) problem, whereas $p^*$ is the optimal solution of the (Primal) problem. The last inequality follows from Lemma \ref{lemma1}. \\\\
{\bf Part (B):} Let $(\boldsymbol{\lambda}, \boldsymbol{\mu})$ be a feasible solution for (Relaxed Dual) problem then by Lemma \ref{dual_lemma}, we can claim that $\mu_j \ge 0$ because $\mu_j$ must be an eigenvalue of the matrix $\mathbf{M}$ and this matrix is always a positive semidefinite. Further, notice that
\begin{align*}
& Tr(\mathbf{M}) = Tr(\sum_{i=1}^{n}\lambda_i\boldsymbol{x}_i\boldsymbol{x}_i^{\top})\\
& = \sum_{i=1}^{n}\lambda_i \; Tr(\boldsymbol{x}_i\boldsymbol{x}_i^{\top})=\sum_{i=1}^{n}\lambda_i Tr(\boldsymbol{x}_i^{\top}\boldsymbol{x}_i)= \sum_{i=1}^n \lambda_i =1
\end{align*}
where, last part of the equation follows from that fact that vectors $\boldsymbol{x}_i$ are unit vectors. Further, if $(\boldsymbol{\lambda},\boldsymbol{\mu})$ is a feasible solution for (Relaxed Dual) then we must also have
\begin{eqnarray}
\sum_{j=1}^{k} \mu_j \le \sum_{j=1}^{d} \mu_j = Tr(\mathbf{M}) =1 \label{upper_bound_eigenvalues_sum}
\end{eqnarray}
where, $\mu_{(k+1)},\ldots,\mu_d$ are the bottom $(d-k)$ eigenvalues of the matrix $M$.  This proves the part (B) of the lemma.\\\\
{\bf Part (C):} This part follows from the standard results in linear algebra.\\\\
{\bf Part (D):}  Let $(\boldsymbol{\lambda}^*,\boldsymbol{\mu}^*)$ be the point of optimality for (Relaxed Dual) problem. Let $\boldsymbol{v}_1^*,\ldots,\boldsymbol{v}_k^*$ be the top-$k$ eigenvectors of the matrix $\mathbf{M}=\sum_{i=1}^{n}\lambda_i^*\boldsymbol{x}_i\boldsymbol{x}_i^{\top}=\boldsymbol{\lambda}^* \mathbf{X}^{\top}\mathbf{X}$. Because $(\boldsymbol{\lambda}^*,\boldsymbol{\mu}^*)$  is an optimal solution for (Relaxed Dual) problem, it must be a feasible solution also for the same problem. This would mean that $\boldsymbol{\mu}_j^*,\;\forall j\in[k]$ must be top-$j^{th}$ eigenvalue of the matrix $\mathbf{M}=\sum_{i=1}^{n}\lambda_i^*\boldsymbol{x}_i\boldsymbol{x}_i^{\top}$. Therefore, by Lemma \ref{dual_lemma}, we can say that the vectors $\boldsymbol{v}_1^*,\ldots,\boldsymbol{v}_k^*$ must minimize the Lagrangian function $L(\boldsymbol{v}_1,\ldots,\boldsymbol{v}_k,\;Z,\; \lambda_1^*,\ldots,\lambda_n^*,\mu_1^*,\ldots,\mu_k^*)$. The feasibility of the vectors $\boldsymbol{v}_1^*,\ldots,\boldsymbol{v}_k^*$ for (Primal) and (Relaxed Primal) problems is trivial due to the fact these vectors are orthonormal basis vectors. The first part of the inequality $p^* \le \text{\LARGE{$\epsilon$}}^* \le 1$ is trivial because $p^*$ is optimal. The second part of this inequality follows from the second inequality given in part (B) of the this lemma.
\end{proof}
%%%%%%%%%%%%%%%%%%%%%%%%%%%%%%%%%%%%%%%%%%%%%
%-------------------------------Duality Gap-------------------------------------------
%%%%%%%%%%%%%%%%%%%%%%%%%%%%%%%%%%%%%%%%%%%%%
\subsection{Approximate Solution of \ref{Primal} Problem} 
Recall that \ref{Primal} is a non-convex optimization problem. We analyze the approximation factor of the following version of our algorithm. \\
\rule{\columnwidth}{1pt}
{\bf Approximation Algorithm:}  
\begin{enumerate}
\item We first find an optimal solution $(\boldsymbol{\lambda}^*,\boldsymbol{\mu}^*)$ for \ref{Dual_Relaxed_Primal} problem. 
\item Next, we find the top-$k$ eigenvectors $\boldsymbol{v}_1^*,\ldots,\boldsymbol{v}_k^*$ of $M=\sum_{i=1}^{n}\lambda_i^*\boldsymbol{x}_i\boldsymbol{x}_i^{\top}$ and treat them as an approximate solution for the original \ref{Primal} problem. Note, these eigenvectors constitute a feasible solution for the \ref{Primal} as well as \ref{Relaxed_Primal} problems (as shown in Lemma \ref{dual_lemma1}). 
\end{enumerate}
\rule{\columnwidth}{1pt} 
In this section, we try to develop a theoretical bound on the quality of such an approximate solution for the original \ref{Primal}. For this, note that the objective function value for both \ref{Relaxed_Primal} and \ref{Primal} problems is identical for the feasible solution $(\boldsymbol{v}_1^*,\ldots,\boldsymbol{v}_k^*)$ obtained by the method suggested above. Moreover, this value is given by $\text{\LARGE{$\epsilon$}}_{ALG}=\underset{i=1,\ldots,n}{\max} \phi_i$, where $\phi_i=1-\lVert {\mathbf{V}^*} ^{\top} \boldsymbol{x}_i\rVert_2^2$ and $\mathbf{V}^*$ is a matrix comprising top-$k$ eigenvectors $\boldsymbol{v}_1^*,\ldots,\boldsymbol{v}_k^*$ of the matrix $\mathbf{M}$ as its columns. The following inequality follows trivially from above fact and Lemma \ref{dual_lemma1} part (A).
\begin{eqnarray}
\widehat{p^*} \le p^* \le \text{\LARGE{$\epsilon$}}_{ALG} \le 1 \label{approx_ratio1}
\end{eqnarray} 
Further, note that for above algorithm, the optimal objective function value for \ref{Dual_Relaxed_Primal} would be $\widehat{d^*}=1-\sum_{j=1}^{k}\mu_j^*$, where $\mu_j^*$ is the $j^{th}$-top eigenvalue of the matrix $\mathbf{M}$. By combining the Inequality (\ref{approx_ratio1}) with Lemma \ref{dual_lemma1} part (A), we can say that
\begin{eqnarray}
\frac{\text{\LARGE{$\epsilon$}}_{ALG}}{p^*} \le \frac{\text{\LARGE{$\epsilon$}}_{ALG}}{\widehat{d^*}} \le \frac{1}{1-\sum_{j=1}^{k}\mu_j^*} \label{approx_ratio2}
\end{eqnarray} 
In order to obtain a meaningful upper bound on the above inequality, we recall the definition of matrix $\mathbf{X}$ whose size is $n \times d$, and whose rows are unit length data vectors $\boldsymbol{x}_1,\boldsymbol{x}_2,\dots, \boldsymbol{x}_n$. Suppose $\sigma_1 \ge \sigma_2 \ge \ldots \ge \sigma_d \ge 0$ are singular values of the matrix $\mathbf{X}$ out of which only $\ell$ are non-zero, where $\ell \le d$ is the rank of the data matrix $\mathbf{X}$. The following theorem gives us the bound on the approximation ratio of our proposed algorithm, where $\kappa$ is the ratio of the highest singular value to the lowest non-zero singular value for the matrix $\mathbf{X}$, i.e. $\kappa = \sigma_1/\sigma_{\ell}$.
\begin{theorem}\label{approx_theorem}
The approximation algorithm described above offers the following approximation guarantees.
\begin{eqnarray}
\frac{\text{\LARGE{$\epsilon$}}_{ALG}}{p^*} \le \frac{1}{1- \frac{\sigma_1^2}{n}}=  {\left(1-\frac{\sigma_1^2}{\sigma_1^2+\sigma_2^2+\ldots+\sigma_l^2}\right)}^{-1} \le \frac{1}{1- \frac{\kappa^2}{l}} 
\end{eqnarray}
\end{theorem}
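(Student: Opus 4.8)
The plan is to turn the bound in~(\ref{approx_ratio2}) --- which already reads $\epsilon_{ALG}/p^{*}\le 1/\bigl(1-\sum_{j=1}^{k}\mu_{j}^{*}\bigr)=1/\widehat{d^{*}}$ --- into a spectral quantity by evaluating the (\ref{Dual_Relaxed_Primal}) objective at one explicit feasible point and using that $\widehat{d^{*}}$ is the maximum over all feasible points. Thus it suffices to show $\sum_{j=1}^{k}\mu_{j}^{*}\le \sigma_{1}^{2}/n$; together with Lemma~\ref{dual_lemma1}(B) this also keeps the denominator $1-\sum_{j=1}^{k}\mu_{j}^{*}$ strictly positive (unless $\mathbf{X}$ has rank one, in which case the asserted bound is vacuous), so the division in~(\ref{approx_ratio2}) is legitimate.

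The feasible point I would plug in is the uniform one, $\lambda_{i}=1/n$ for every $i$, admissible since $\lambda_{i}\ge 0$ and $\sum_{i}\lambda_{i}=1$. For this choice the matrix of Lemma~\ref{dual_lemma} becomes $M=\sum_{i=1}^{n}\tfrac1n\boldsymbol{x}_{i}\boldsymbol{x}_{i}^{\top}=\tfrac1n\mathbf{X}^{\top}\mathbf{X}$, whose eigenvalues are exactly $\sigma_{1}^{2}/n\ge\sigma_{2}^{2}/n\ge\cdots$; hence the dual objective there is $1-\tfrac1n(\sigma_{1}^{2}+\cdots+\sigma_{k}^{2})$, and optimality of $\widehat{d^{*}}$ gives $\sum_{j=1}^{k}\mu_{j}^{*}\le\tfrac1n(\sigma_{1}^{2}+\cdots+\sigma_{k}^{2})$. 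I would then simplify using that the rows of $\mathbf{X}$ are unit: $\sigma_{1}^{2}+\cdots+\sigma_{\ell}^{2}=Tr(\mathbf{X}^{\top}\mathbf{X})=\sum_{i=1}^{n}\lVert\boldsymbol{x}_{i}\rVert_{2}^{2}=n$, which rewrites $1/(1-\sigma_{1}^{2}/n)$ as $\bigl(1-\sigma_{1}^{2}/(\sigma_{1}^{2}+\cdots+\sigma_{\ell}^{2})\bigr)^{-1}$. The last inequality is then arithmetic: $\sigma_{1}^{2}+\cdots+\sigma_{\ell}^{2}\ge\ell\sigma_{\ell}^{2}$ since $\sigma_{\ell}$ is the smallest nonzero singular value, so $\sigma_{1}^{2}/(\sigma_{1}^{2}+\cdots+\sigma_{\ell}^{2})\le\sigma_{1}^{2}/(\ell\sigma_{\ell}^{2})=\kappa^{2}/\ell$, and $t\mapsto 1/(1-t)$ is increasing on $[0,1)$.

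The step I expect to be the main obstacle is the very first estimate: the uniform point directly yields the numerator $\sigma_{1}^{2}+\cdots+\sigma_{k}^{2}$, whereas the statement wants $\sigma_{1}^{2}$ alone. For $k=1$ there is nothing to do --- the two agree, and in fact the dual-optimal $\boldsymbol{\lambda}^{*}$ is precisely the minimizer of the top eigenvalue of $M(\boldsymbol{\lambda})$, so $\sum_{j=1}^{k}\mu_{j}^{*}=\mu_{1}^{*}\le\mu_{1}\bigl(\tfrac1n\mathbf{X}^{\top}\mathbf{X}\bigr)=\sigma_{1}^{2}/n$. For $k\ge 2$ one must do more --- either exploit the special structure of $\mathbf{X}$ (its rows are normalized pairwise differences $\boldsymbol{u}_{i}-\boldsymbol{u}_{j}$, forcing $\mathbf{X}$ to be heavily rank-deficient, $\ell\le r-1$ while $n=\binom{r}{2}$, with the leading singular value dominating), or analyze the dual optimizer $\boldsymbol{\lambda}^{*}$ directly. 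That is the only non-routine point, and also where any hidden hypothesis (a ``well-behaved'' spectrum, or a gap at the $k$-th eigenvalue so that the top-$k$ eigenspace of $M$ is unambiguous) would need to be made explicit; the trace identity and the $\kappa^{2}/\ell$ estimate that follow are mechanical.
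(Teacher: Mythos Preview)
Your approach and the paper's diverge precisely at the estimate $\sum_{j=1}^k\mu_j^{*}\le\sigma_1^{2}/n$. You evaluate the dual at the single feasible point $\lambda_i=1/n$, obtaining $\sum_j\mu_j^{*}\le(\sigma_1^{2}+\cdots+\sigma_k^{2})/n$. The paper instead first relaxes $\sum_{j=1}^k\mu_j\le\operatorname{Tr}(M)$ for \emph{arbitrary} feasible $\boldsymbol\lambda$, rewrites that trace via the SVD $\mathbf{X}^{\top}=\mathbf{U}\boldsymbol\Sigma\widetilde{\mathbf V}^{\top}$ as the convex combination $\sum_{j=1}^{n}\lambda_j\bigl(\sum_i\sigma_i^{2}\widetilde v_{ij}^{2}\bigr)$, and only then minimizes over the simplex to get $\min_{j}\sum_i\sigma_i^{2}\widetilde v_{ij}^{2}$, followed by a row--norm estimate on $\widetilde{\mathbf V}$. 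So the paper keeps the freedom in $\boldsymbol\lambda$ \emph{after} passing to the trace, whereas you fix $\boldsymbol\lambda$ first. The trace identity $\sum_i\sigma_i^{2}=n$ and the final $\kappa^{2}/\ell$ estimate are handled identically in both routes.

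Your hesitation about $k\ge 2$ is not a minor technicality but a genuine obstruction, and the paper's route does not actually bypass it. Since every $\boldsymbol x_i$ is a unit vector, $\sum_i\sigma_i^{2}\widetilde v_{ij}^{2}=(\mathbf X\mathbf X^{\top})_{jj}=1$ for \emph{every} $j$, so the trace chain taken literally yields only $\sum_j\mu_j^{*}\le 1$; the subsequent steps that extract $\sigma_1^{2}/n$ --- replacing $\sum_i\sigma_i^{2}\widetilde v_{ij}^{2}$ by $\sigma_1^{2}\sum_i\widetilde v_{ij}^{2}$ as an equality, and asserting $\sum_{j}\sum_i\widetilde v_{ij}^{2}=1$ (the Frobenius norm of $\widetilde{\mathbf V}$ gives $d$, not $1$) --- do not hold as written. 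Concretely, for $\mathbf X=I_d$ and any $k\ge 2$ one gets $\boldsymbol\lambda^{*}=\mathbf 1/d$ and $\sum_{j=1}^{k}\mu_j^{*}=k/d>1/d=\sigma_1^{2}/n$, so the claimed inequality is false in general. Your uniform--$\boldsymbol\lambda$ bound $(\sigma_1^{2}+\cdots+\sigma_k^{2})/n$ is thus the honest outcome of the dual argument: it proves the theorem verbatim when $k=1$, and for general $k$ it delivers the corrected guarantee $\epsilon_{ALG}/p^{*}\le\bigl(1-(\sigma_1^{2}+\cdots+\sigma_k^{2})/n\bigr)^{-1}$, which is the most one can extract from inequality~(\ref{approx_ratio2}) without further hypotheses.
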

\begin{proof}
As per Inequality (\ref{approx_ratio2}), in order to prove the claim of this theorem, it suffices to get an upper bound on the quantity $\sum_{j=1}^{k}\mu_j^*$. Thus, our goal is to get an upper bound on the sum of top-$k$ singular values of the matrix $\mathbf{M}$. For this, we note that $\mathbf{M} = \mathbf{X^{\top}}\mathbf{\Lambda}\mathbf{X}$, where $\mathbf{\Lambda}=diag(\lambda_1,\ldots,\lambda_n)$, $\sum_{i=1}^{n}\lambda_i=1$, and $\lambda_i \ge 0\ \forall i$. 
Let the SVD of the matrix $\mathbf{X^{\top}}$ be given as follows.
\begin{eqnarray}
\mathbf{X^{\top}} = \mathbf{U}\mathbf{\Sigma}\mathbf{\widetilde{V}^{\top}}
\end{eqnarray}
where $\mathbf{U}$ is a $d \times d$ orthogonal matrix, $\mathbf{\Sigma}$ is a $d \times d$ diagonal matrix containing $\sigma_1, \sigma_2, \ldots, \sigma_d$ on its diagonal, and  $\mathbf{\widetilde{V}}$ is an $n \times d$ matrix having orthonormal column vectors $\boldsymbol{\widetilde{v}_1},\boldsymbol{\widetilde{v}_2},\ldots,\boldsymbol{\widetilde{v}_d}$. Recall that, we have used symbol $\mathbf{V}$ to denote the solution of the primal problem which is a $d \times k$ matrix and hence we are using a different symbol $\mathbf{\widetilde{V}^{\top}}$ to denote the left singular vectors of the matrix $\mathbf{X^{\top}}$.

By using SVD of the matrix $\mathbf{X^{\top}}$, we can rewrite the expression for matrix $\mathbf{M}$ as below.
\begin{eqnarray}
\mathbf{M} = \mathbf{U}\mathbf{\Sigma}\mathbf{\widetilde{V}^{\top}} \mathbf{\Lambda}\mathbf{\widetilde{V}}\mathbf{\Sigma} \mathbf{U^{\top}} \label{factor_M}
\end{eqnarray} 
Recall that matrix $\mathbf{M}$ is a PSD matrix and hence, we must have
\begin{eqnarray}
\sum_{j=1}^{k}\mu_j \le \sum_{j=1}^{d} \mu_j = \text{Tr}(\mathbf{M})
\end{eqnarray}
By making use of Equation (\ref{factor_M}), above inequality can be written as
\begin{eqnarray}
\sum_{j=1}^{k}\mu_j & \le & \text{Tr}(\mathbf{U}\mathbf{\Sigma}\mathbf{\widetilde{V}^{\top}} \mathbf{\Lambda}\mathbf{\widetilde{V}}\mathbf{\Sigma} \mathbf{U^{\top}}) =  \text{Tr}(\mathbf{\Sigma}\mathbf{\widetilde{V}^{\top}} \mathbf{\Lambda}\mathbf{\widetilde{V}}\mathbf{\Sigma})
\end{eqnarray} 
From the definition of \ref{Dual_Relaxed_Primal}, we can write
\begin{eqnarray}
\sum_{j=1}^{k}\mu_j^* = \underset{\Lambda}{\text{Min}}\;\left(\sum_{j=1}^{k}\mu_j\right) \le \underset{\Lambda}{\text{Min}}\; \left(\text{Tr}(\mathbf{\Sigma}\mathbf{\widetilde{V}^{\top}} \mathbf{\Lambda}\mathbf{\widetilde{V}}\mathbf{\Sigma})\right) \label{upperbound_on_sum_mu_k}
\end{eqnarray}
In lieu of the fact that $\mathbf{\Lambda}$ is a diagonal matrix having $\lambda_1,\ldots,\lambda_n$ on its diagonal, where $\sum_{i} \lambda_i=1$ and $\lambda \ge 0$, and  $\boldsymbol{\widetilde{v}_1},\boldsymbol{\widetilde{v}_2},\ldots,\boldsymbol{\widetilde{v}_d}$ are orthonormal column vectors of the matrix $\mathbf{\widetilde{V}}$, it is not difficult to verify that $i^{th}$ diagonal entry of the matrix $\mathbf{\Sigma}\mathbf{\widetilde{V}^{\top}} \mathbf{\Lambda}\mathbf{\widetilde{V}}\mathbf{\Sigma}$ can be given by 
\begin{eqnarray} 
\left[\mathbf{\Sigma}\mathbf{\widetilde{V}^{\top}} \mathbf{\Lambda}\mathbf{\widetilde{V}}\mathbf{\Sigma}\right]_{ii} = \sigma_i^2\left(\widetilde{v}^2_{i1} \lambda_1 + \widetilde{v}^2_{i2} \lambda_2 +\ldots+\widetilde{v}^2_{in} \lambda_n\right)
\end{eqnarray}
where, we follow the convention that
\begin{eqnarray}
\mathbf{\widetilde{V}^{\top}} = 
\left[ 
	{\begin{array}{c}
   \boldsymbol{\widetilde{v}^{\top}_1}\\\\
   \boldsymbol{\widetilde{v}^{\top}_2}\\\\
   ... \\\\
   \boldsymbol{\widetilde{v}^{\top}_d}
  \end{array} } 
  \right] = \left[ 
	{\begin{array}{cccc}
   {\widetilde{v}_{11}}&{\widetilde{v}_{12}}&\ldots&{\widetilde{v}_{1n}}\\\\
   {\widetilde{v}_{21}}&{\widetilde{v}_{22}}&\ldots&{\widetilde{v}_{2n}}\\\\
   ... \\\\
   {\widetilde{v}_{d1}}&{\widetilde{v}_{d2}}&\ldots&{\widetilde{v}_{dn}}
  \end{array} } 
  \right]
\end{eqnarray}
This would imply that
\begin{eqnarray} 
&&\text{Tr}\left(\mathbf{\Sigma}\mathbf{\widetilde{V}^{\top}} \mathbf{\Lambda}\mathbf{\widetilde{V}}\mathbf{\Sigma}\right) = \sum_{i=1}^{d}\sigma_i^2\left(\widetilde{v}^2_{i1} \lambda_1 + \widetilde{v}^2_{i2} \lambda_2 +\ldots+\widetilde{v}^2_{in} \lambda_n\right)\\
&=& \lambda_1 \left(\sum_{i=1}^{d}\sigma_i^2 \widetilde{v}^2_{i1}\right)+ \lambda_2\left(\sum_{i=1}^{d}\sigma_i^2 \widetilde{v}^2_{i2}\right) + \ldots + \lambda_n\left(\sum_{i=1}^{d} \sigma_i^2 \widetilde{v}^2_{in}\right) \label{Trace_equality}
\end{eqnarray}
Substituting Equation (\ref{Trace_equality}) back into the Inequality (\ref{upperbound_on_sum_mu_k}) would give us the following inequality.
\begin{eqnarray*}
\sum_{i=1}^{k}\mu_j^* &\le& \underset{j=1,\ldots,n}{\text{Min}} \; \sum_{i=1}^{d} \sigma_i^2 \widetilde{v}^2_{ij} \le \underset{j=1,\ldots,n}{\text{Min}} \; \sum_{i=1}^{d} \sigma_i^2 \widetilde{v}^2_{ij} \\
 &=& \sigma_1^2\;\; \underset{j=1,\ldots,n}{\text{Min}} \; {\left(\widetilde{v}^2_{1j}+\widetilde{v}^2_{2j}+\ldots+\widetilde{v}^2_{dj}\right)}
\end{eqnarray*}
Again, because of the fact that $\boldsymbol{\widetilde{v}_1},\boldsymbol{\widetilde{v}_2},\ldots,\boldsymbol{\widetilde{v}_d}$ are orthonormal column vectors of the matrix $\mathbf{\widetilde{V}}$, we can write
\begin{eqnarray}
\sum_{j=1}^{n}{\left(\widetilde{v}^2_{1j}+\widetilde{v}^2_{2j}+\ldots+\widetilde{v}^2_{dj}\right)} =1
\end{eqnarray}
This implies that
\begin{eqnarray}
\sum_{i=1}^{k}\mu_j^* &\le& \sigma_1^2\;\; \underset{j=1,\ldots,n}{\text{Min}} \; {\left(\widetilde{v}^2_{1j}+\widetilde{v}^2_{2j}+\ldots+\widetilde{v}^2_{dj}\right)} \le \frac{\sigma_1^2}{n} \label{final_inequality_1}
\end{eqnarray}
This gives us the first part of the inequality in the theorem. In order to get the second part of the inequality, we note that the following relation is easy to verify.
\begin{eqnarray}
n = \text{Tr}\left(\mathbf{X} \mathbf{X}^{\top} \right) = \text{Tr}\left(\mathbf{V} \mathbf{\Sigma}\mathbf{U}^{\top}\mathbf{U}\mathbf{\Sigma}\mathbf{V}^{\top} \right) = \sigma_1^2+\sigma_2^2+\ldots+\sigma_l^2 \label{final_inequality_2}
\end{eqnarray}
Substituting above equation into the Inequality (\ref{final_inequality_1}) would yield the second desired inequality in the theorem's statement. The last inequality in the theorem statement follows from the definition of $\kappa$ for the matrix $\mathbf{X}^{\top}$ as given by
\begin{eqnarray}
\kappa = \frac{\sigma_1}{\sigma_l}
\end{eqnarray} 
\end{proof}
Below are some interesting and useful insights about our algorithm that once can easily derive with the help of Theorem \ref{approx_theorem}.  
\begin{itemize}
\item The above theorem bounds the gap between primal objective value corresponding to our approximate solution and the unknown optimal primal objective value.
\item The approximation bound given in Theorem \ref{approx_theorem} depends on the spectral properties of the input dataset. For example, if the given dataset $\mathbf{X}$ is well behaved, that is $\kappa=1$, then we obtain the approximation factor of $\frac{l}{l-1}$ which is bounded above by $2$ assuming $\ell \ge 2$.  
\item Further, in such a case, this bound starts approaching towards $1$ as the rank of the data matrix increases (of course, the dimension $d$ of the data has to increases first for $\ell$ to increase). Thus, we can say that for well behaved dataset of full rank in very large dimensional spaces, our algorithm offers nearly optimal solution for the problem of length preserving orthogonal projections of the data.   
\end{itemize}

\section{Projected Gradient Ascent}
We now give a provable algorithm to get an optimal solution of the \ref{Dual_Relaxed_Primal} problem. Since the dual formulation is always a convex program~\cite{Boyd04}, the objective function of \ref{Dual_Relaxed_Primal} is concave in the dual variables and the feasible set formed by the constraint set is a convex set $\textbf{C} \subset \mathbb{R}^n$. 
%We denote the dual objective function as $f(\lambda)=1- \sum_{j=1}^k \mu_j$ and the constraints $\sum_{i=1}^n \lambda_i =1$ and $\lambda_i \geq 0$ $ \forall i \in [n]$ as a convex set $\textbf{C} \subset \mathbb{R}^n$. 
In each iteration, our algorithm essentially performs a gradient ascent update on dual variables $\lambda_i, i=1, \ldots,n$ and then projects the update back onto feasible convex set $\textbf{C}$. Note that we need not worry about dual variables $\mu_j$ because for any assignment of the variables $\lambda_i$, the values of the variables $\mu_j$ gets determined automatically because of first constraint in \ref{Dual_Relaxed_Primal} problem. Therefore, we perform gradient ascent step only on $\lambda_i$ variables.

Observe that feasible region of $\lambda_i$ variables forms the standard probability simplex in $\mathbb{R}^n$. Because of this, we make use of the {\em Simplex Projection algorithm} proposed by \cite{Wang13} for the purpose of performing the projection step in each iteration of our algorithm. This algorithm 
runs in time $\mathcal{O}(d\log(d))$ time. We call the projection step of our algorithm as (\textbf{Proj}\textsubscript{\textbf{C}}). 

The pseudo code for the projected gradient ascent algorithm is given in the form of Algorithm \ref{Proj_Grad_Algo}. In the line number 5 (and also 7) of this code, we compute the \ref{Primal} objective function value $\text{\LARGE{$\epsilon$}}$ for two different dual feasible solutions, namely ${\boldsymbol\lambda}^{(t+1)}$ and ${\boldsymbol\lambda}^{best}$. Note, the last \texttt{if-else} statement identifies better of the two solutions (in terms of the primal objective function value). 
%%%%%%%%%%%%%%%%%%%%%%%%%%%%%%%%%%%%%%%%%%%%%%%%%%%%%%
%\IncMargin{1em}
\begin{algorithm}[!htb]
\SetKwInOut{Input}{Input}
\SetKwInOut{Output}{Output}
\SetKwInOut{Initialization}{Initialize}
\Input{$x_i, i =1, \ldots, n$}
\BlankLine
%\Output{Final labels ${\boldsymbol{y}}^*\in\{-1,1\}^n$}
\Initialization{$\lambda_i^0 =\lambda_i^{best}= 1/n$ $\forall i \in [n]$}
\BlankLine
\caption{Projected Gradient Ascent\label{Proj_Grad_Algo}}
\For{$t = 1, \ldots, T$}
{
	Compute $\nabla g(\boldsymbol{\lambda},\boldsymbol{\mu}))$ for $(\boldsymbol{\lambda},\boldsymbol{\mu} ) = \left({\boldsymbol{\lambda}}^{(t)},{\boldsymbol{\mu}}^{(t)}\right)$ by making use of Lemma \ref{lemma_dual_gradient}\;
    \BlankLine
	$\widetilde{\boldsymbol{\lambda}}^{(t+1)} \leftarrow {\boldsymbol{\lambda}}^{(t)} + \eta \nabla g({\boldsymbol{\lambda}}^{(t)},{\boldsymbol{\mu}}^{(t)})$\;
\BlankLine
${\boldsymbol\lambda}^{(t+1)} \leftarrow \textbf{Proj}_{\textbf{C}}\left(\widetilde{\boldsymbol\lambda}^{(t+1)}\right)$\;
\BlankLine
\If
	{$\left(\text{\LARGE{$\epsilon$}} \text{ for } {\boldsymbol\lambda}^{(t+1)}\right)$ < $\left(\text{\LARGE{$\epsilon$}} \text{ for } {\boldsymbol\lambda}^{best}\right)$}
	{
		\BlankLine
        ${\boldsymbol\lambda}^{best} = {\boldsymbol\lambda}^{(t+1)}$
	}
}	
\eIf{$\left(\text{\LARGE{$\epsilon$}} \text{ for } {\boldsymbol\lambda}^{best}\right) < \left(\text{\LARGE{$\epsilon$}} \text{ for } {\frac{1}{T}\sum_{t=1}^{T}\boldsymbol\lambda}^{(t)}\right)$}
{
	\BlankLine
    \Output{${\boldsymbol\lambda}^{best}$}
}
{
	\Output{${\frac{1}{T}\sum_{t=1}^{T}\boldsymbol\lambda}^{(t)}$}
}
\end{algorithm}
%%%%%%%%%%%%%%%%%%%%%%%%%%%%%%%%%%%%%%%%%%%%%%%%%%%%%%%%%
The routine $\textbf{Proj}_{\textbf{C}}(\cdot)$ is given in Algorithm \ref{Proj_C}.
%%%%%%%%%%%%%%%%%%%%%%%%%%%%%%%%%%%%%%%%%%%%%%%%%%%%%%
\IncMargin{1em}
\begin{algorithm}[!htb]
\SetKwInOut{Input}{Input}
\SetKwInOut{Output}{Output}
\SetKwInOut{Initialization}{Initialize}
\SetKwInOut{Objective}{Objective}
\Input{$\boldsymbol\lambda \in \mathbb{R}^n$}
\BlankLine
\Objective{Find $\underset{\widetilde{\boldsymbol\lambda} \in \textbf{C}}{argmin}\| \widetilde{\boldsymbol\lambda} - \boldsymbol\lambda \|_{2}$}
%\Output{Final labels ${\boldsymbol{y}}^*\in\{-1,1\}^n$}
%\Initialization{$\lambda_i^0 =\lambda_i^{best}= 1/n$ $\forall i \in [n]$}
\BlankLine
\caption{$\textbf{Proj}_{\textbf{C}}(\cdot)$\label{Proj_C}}
Sort coordinated of $\boldsymbol\lambda$ into $\lambda_{(1)} \geq \ldots \lambda_{(n)} $\;
\BlankLine
$\rho \leftarrow \underset{j\in[n]}{\max}\left\{\lambda_{(j)} + \left(1 - \sum_{i=1}^j \lambda_{(i)}\right)/j\right\}$\;
\BlankLine
$\alpha \leftarrow (1/\rho)\left(1-\sum_{i=1}^{\rho} \lambda_{(i)}\right)$\;
\BlankLine
\Output{$\widetilde{\boldsymbol\lambda}$ such that $\widetilde{\lambda_i} = \max(\lambda_i + \alpha,0)\;\forall i \in [n]$}
\end{algorithm}
%%%%%%%%%%%%%%%%%%%%%%%%%%%%%%%%%%%%%%%%%%%%%%%%%%%%%%%%

Now, we present a key lemma related to the gradient of (Relaxed Dual) objective.
\begin{lemma}[{\bf Dual Gradient}] \label{lemma_dual_gradient} If $\boldsymbol{\lambda}^{(t)}$ is not an optimal solution of \ref{Dual_Relaxed_Primal} problem then for $\ell =1,\ldots,n$, the $\ell^{th}$ coordinate of the gradient vector $\nabla g\left(\boldsymbol{\lambda}^{(t)}, \boldsymbol{\mu}^{(t)}\right)$ is given by
\begin{eqnarray}
{\partial g\left(\boldsymbol{\lambda}^{(t)}, \boldsymbol{\mu}^{(t)}\right)}/{\partial {\lambda}_{\ell}^{(t)}}
%&=& -\sum\nolimits_{j=1}^{k} {\boldsymbol{v}_j^{(t)}}^{\top} (\boldsymbol{x}_{\ell}\boldsymbol{x}_{\ell}^{\top})  {\boldsymbol{v}_j^{(t)}}\nonumber \\
&=& -\sum\nolimits_{j=1}^{k} \lVert \boldsymbol{x}_{\ell}^{\top} \boldsymbol{v}_j^{(t)}\rVert^2
\end{eqnarray}
\end{lemma}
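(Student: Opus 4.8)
The plan is to differentiate the dual objective $g(\boldsymbol{\lambda},\boldsymbol{\mu}) = 1 - \sum_{j=1}^k \mu_j$ with respect to $\lambda_\ell$, where each $\mu_j = \mu_j(\boldsymbol{\lambda})$ is the $j$-th largest eigenvalue of $M(\boldsymbol{\lambda}) = \sum_{i=1}^n \lambda_i \boldsymbol{x}_i\boldsymbol{x}_i^\top$. The key tool is the classical formula for the derivative of a simple eigenvalue: if $\mu_j$ is a simple eigenvalue of the symmetric matrix $M(\boldsymbol{\lambda})$ with unit eigenvector $\boldsymbol{v}_j^{(t)}$, then $\partial \mu_j / \partial \lambda_\ell = {\boldsymbol{v}_j^{(t)}}^\top (\partial M / \partial \lambda_\ell)\, \boldsymbol{v}_j^{(t)}$. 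Since $\partial M / \partial \lambda_\ell = \boldsymbol{x}_\ell \boldsymbol{x}_\ell^\top$, this immediately gives $\partial \mu_j / \partial \lambda_\ell = {\boldsymbol{v}_j^{(t)}}^\top \boldsymbol{x}_\ell \boldsymbol{x}_\ell^\top \boldsymbol{v}_j^{(t)} = \lVert \boldsymbol{x}_\ell^\top \boldsymbol{v}_j^{(t)}\rVert^2$, and summing over $j$ with the overall minus sign yields the claimed expression.

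First I would note that by Lemma~\ref{dual_lemma}, for a feasible dual point the $\mu_j$ are exactly the top-$k$ eigenvalues of $M(\boldsymbol{\lambda})$ and the minimizing $\boldsymbol{v}_j$ are the corresponding eigenvectors, so the object we are differentiating is a well-defined function of $\boldsymbol{\lambda}$ on the simplex. Then I would establish the perturbation identity: write $M(\boldsymbol{\lambda}^{(t)})\boldsymbol{v}_j^{(t)} = \mu_j^{(t)} \boldsymbol{v}_j^{(t)}$ with ${\boldsymbol{v}_j^{(t)}}^\top \boldsymbol{v}_j^{(t)} = 1$, differentiate both sides with respect to $\lambda_\ell$, and left-multiply by ${\boldsymbol{v}_j^{(t)}}^\top$; the term involving $\partial \boldsymbol{v}_j / \partial \lambda_\ell$ drops out because ${\boldsymbol{v}_j^{(t)}}^\top (M - \mu_j^{(t)} \mathbf{I}) = \boldsymbol{0}$, and the term ${\boldsymbol{v}_j^{(t)}}^\top \boldsymbol{v}_j^{(t)}$ having derivative zero (differentiate the normalization) kills the remaining stray piece, leaving $\partial \mu_j / \partial \lambda_\ell = {\boldsymbol{v}_j^{(t)}}^\top \boldsymbol{x}_\ell \boldsymbol{x}_\ell^\top \boldsymbol{v}_j^{(t)}$. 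Finally I would assemble: $\partial g / \partial \lambda_\ell = -\sum_{j=1}^k \partial \mu_j / \partial \lambda_\ell = -\sum_{j=1}^k \lVert \boldsymbol{x}_\ell^\top \boldsymbol{v}_j^{(t)}\rVert^2$.

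The main obstacle is the differentiability itself: the map $\boldsymbol{\lambda} \mapsto \mu_j(\boldsymbol{\lambda})$ is only guaranteed smooth when $\mu_j$ is a simple eigenvalue (or, more precisely, when the eigenvalue gap $\mu_k > \mu_{k+1}$ holds so that the top-$k$ eigenspace varies smoothly even if individual eigenvalues coincide — note that the sum $\sum_{j=1}^k \mu_j = \mathrm{Tr}(\mathbf{V}_k^\top M \mathbf{V}_k)$ with $\mathbf{V}_k$ the top-$k$ eigenprojection is differentiable under just this gap condition, which is the cleaner route). At points where $\mu_k = \mu_{k+1}$ the function is only directionally differentiable and one would work with a supergradient instead; the hypothesis "$\boldsymbol{\lambda}^{(t)}$ is not an optimal solution" is presumably invoked to sidestep degenerate configurations, or one simply assumes the generic gap condition holds along the trajectory. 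I would state this gap assumption explicitly and then the computation above goes through verbatim.
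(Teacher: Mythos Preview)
Your proposal is correct and follows essentially the same route as the paper: both compute $\partial \mu_j/\partial \lambda_\ell$ via the standard eigenvalue perturbation (Hellmann--Feynman) identity, with the eigenvector-derivative contribution vanishing thanks to the normalization ${\boldsymbol{v}_j^{(t)}}^\top \boldsymbol{v}_j^{(t)} = 1$ (the paper differentiates the Rayleigh-quotient form $\mu_j = \boldsymbol{v}_j^\top M \boldsymbol{v}_j$ rather than the eigenvalue equation, but the mechanism is identical). Your explicit discussion of the eigenvalue-gap condition needed for differentiability is in fact more careful than the paper, which simply proceeds as if smoothness were given.
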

\begin{proof} Note, the gradient vector $\nabla g\left(\boldsymbol{\lambda}^{(t)}, \boldsymbol{\mu}^{(t)}\right)$ would be  
\begin{eqnarray*}
{\partial g\left(\boldsymbol{\lambda}^{(t)}, \boldsymbol{\mu}^{(t)}\right)}/{\partial \boldsymbol{\lambda}^{(t)}}= {\partial \left(1-\sum\nolimits_{j=1}^{k} \mu_j^{(t)}\right)}/{\partial \boldsymbol{\lambda}^{(t)}}
\end{eqnarray*}
The ${\ell}^{th}$ coordinate of this ascent direction can be given by
\begin{eqnarray*}
{\partial g\left(\boldsymbol{\lambda}^{(t)}, \boldsymbol{\mu}^{(t)}\right)}/{\partial {\lambda}_{\ell}^{(t)}} = - {\partial \left(\sum\nolimits_{j=1}^{k} \mu_j^{(t)}\right)}/{\partial {\lambda}_{\ell}^{(t)}},\forall {\ell}\in [n]
\end{eqnarray*}
Let $\boldsymbol{v}_1^{(t)}, \ldots \boldsymbol{v}_k^{(t)}$ are the top-$k$ eigenvectors of the matrix $\mathbf{M}(\boldsymbol{\lambda}^{(t)})$ then above equation can be written as
\begin{eqnarray*}
{\partial g\left(\boldsymbol{\lambda}^{(t)}, \boldsymbol{\mu}^{(t)}\right)}/{\partial {\lambda}_{\ell}^{(t)}} = - {\partial \left(\sum\nolimits_{j=1}^{k} {\boldsymbol{v}_j^{(t)}}^{\top}\mathbf{M}(\boldsymbol{\lambda}^{(t)})\boldsymbol{v}_j^{(t)}\right)}/{\partial {\lambda}_{\ell}^{(t)}}
\end{eqnarray*}
Recall that $\mathbf{M}(\boldsymbol{\lambda}^{(t)}) = \sum\nolimits_{i=1}^{n}\lambda_i^{(t)}\boldsymbol{x}_i\boldsymbol{x}_i^{\top}$. Substituting this expression for $\mathbf{M}(\boldsymbol{\lambda}^{(t)})$ in the previous equation gives us the following relation.
\begin{eqnarray*}
&{\partial g\left(\boldsymbol{\lambda}^{(t)}, \boldsymbol{\mu}^{(t)}\right)}/{\partial {\lambda}_{\ell}^{(t)}} = -\sum\nolimits _{j=1}^{k} {\boldsymbol{v}_j^{(t)}}^{\top} (\boldsymbol{x}_{\ell}\boldsymbol{x}_{\ell}^{\top})  {\boldsymbol{v}_j^{(t)}}\\
& - 2\sum_{j=1}^{k} \sum_{i=1}^{n}\lambda_i^{(t)}  \left({\partial \boldsymbol{v}_j^{(t)}}/{\partial \lambda_{\ell}^{(t)}}\right)^{\top}(\boldsymbol{x}_{i}\boldsymbol{x}_{i}^{\top})\boldsymbol{v}_j^{(t)}\\
&= -\sum\nolimits_{j=1}^{k} \lVert \boldsymbol{x}_{\ell}^{\top} \boldsymbol{v}_j^{(t)}\rVert^2 - 2\sum\nolimits_{j=1}^{k} \left({\partial \boldsymbol{v}_j^{(t)}}/{\partial \lambda_{\ell}^{(t)}}\right)^{\top}\mathbf{M}(\boldsymbol{\lambda}^{(t)})\boldsymbol{v}_j^{(t)}\\
&= -\sum\nolimits_{j=1}^{k} \lVert \boldsymbol{x}_{\ell}^{\top} \boldsymbol{v}_j^{(t)}\rVert^2 - 2\sum\nolimits_{j=1}^{k} \mu_j^{(t)}\left({\partial \boldsymbol{v}_j^{(t)}}/{\partial \lambda_{\ell}^{(t)}}\right)^{\top}\boldsymbol{v}_j^{(t)}
\end{eqnarray*}
The last term in the above expression would be zero because
\begin{eqnarray*}
\left({\partial \boldsymbol{v}_j^{(t)}}/{\partial \lambda_{\ell}^{(t)}}\right)^{\top}\boldsymbol{v}_j^{(t)}=0,\;\forall j\in [k];\;\forall \ell\in [n]
\end{eqnarray*}
This facts follows from the another fact that $\boldsymbol{v}_j^{(t)},\;\forall j=1,\ldots,k$ are eigenvectors and hence we must have ${\boldsymbol{v}_j^{(t)}}^{\top}\boldsymbol{v}_j^{(t)}=1$. Now differentiating this relation on both the sides with respect to $\lambda_{\ell}$ would give us the desired fact.
\end{proof}

\subsection{Convergence Guarantees}
In this section, we show that Projected Gradient Ascent algorithm given earlier converges to the optimal solution. For this, we just recall here a known result (Theorem \ref{bubeck_thm}) in the literature of convex optimization. Readers can refer to Theorem 3.2. in \cite{Bubeck15} for the proof of this result. Here, we have adopted this result for the case of concave functions.
%\rule{\columnwidth}{1pt}
\begin{theorem}[\cite{Bubeck15}] \label{bubeck_thm}
Consider a convex optimization problem, where objective function $f(\boldsymbol{x})$ is a concave function that needs to be maximized over a feasible convex set $\boldsymbol{C}\in \mathbb{R}^n$. Further, let $f(\boldsymbol{x})$ and $\boldsymbol{C}$ satisfy the following conditions, where $L, D, \eta >0$ are constants. 
\begin{enumerate}
\item $\lVert \nabla f(\boldsymbol x) \rVert_{2} \leq $ L
\item $ \lVert\boldsymbol{x}-\boldsymbol{y}\rVert_{2} \leq D; \forall \boldsymbol{x}, \boldsymbol{y} \in C$
\item $\eta = D/(L\sqrt{T})$
\end{enumerate}
Let ${\boldsymbol x}^*$ be the optimal solution of this problem. If we run the projected gradient ascent algorithm on this problem for $T$ iterations with step size $\eta$ then following bound holds. 
\begin{eqnarray}
f(\boldsymbol{x}^*) - \frac{LD}{\sqrt{T}} &\le & f\left(\frac{\sum_{t=1}^T \boldsymbol{x}_{t}}{T}\right)
\end{eqnarray} 
%\rule{\columnwidth}{1pt}
\end{theorem}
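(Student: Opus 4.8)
The plan is to reproduce the classical convergence analysis of projected gradient ascent; since the statement is quoted from \cite{Bubeck15}, the argument below is the standard one, specialized to concave maximization over a convex set. Denote the iterates produced by the algorithm by $\boldsymbol{x}_1,\boldsymbol{x}_2,\ldots,\boldsymbol{x}_{T+1}$, where $\boldsymbol{x}_{t+1}=\Pi_{\boldsymbol C}\!\left(\boldsymbol{x}_t+\eta\,\nabla f(\boldsymbol{x}_t)\right)$ and $\Pi_{\boldsymbol C}$ is the Euclidean projection onto $\boldsymbol C$ (the map realized by $\textbf{Proj}_{\textbf{C}}$ in Algorithm \ref{Proj_Grad_Algo}). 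The two structural facts I will use are (i) non-expansiveness of the projection, $\lVert\Pi_{\boldsymbol C}(\boldsymbol y)-\boldsymbol z\rVert_2\le\lVert\boldsymbol y-\boldsymbol z\rVert_2$ for every $\boldsymbol z\in\boldsymbol C$, and (ii) the first-order characterization of concavity, $f(\boldsymbol z)\le f(\boldsymbol x)+\nabla f(\boldsymbol x)^{\top}(\boldsymbol z-\boldsymbol x)$.

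First I would bound the one-step progress toward $\boldsymbol{x}^*$. Applying (i) with $\boldsymbol y=\boldsymbol{x}_t+\eta\nabla f(\boldsymbol{x}_t)$ and $\boldsymbol z=\boldsymbol{x}^*$, then expanding the square, gives
\begin{align*}
\lVert\boldsymbol{x}_{t+1}-\boldsymbol{x}^*\rVert_2^2
\le \lVert\boldsymbol{x}_t-\boldsymbol{x}^*\rVert_2^2+2\eta\,\nabla f(\boldsymbol{x}_t)^{\top}(\boldsymbol{x}_t-\boldsymbol{x}^*)+\eta^2\lVert\nabla f(\boldsymbol{x}_t)\rVert_2^2 .
\end{align*}
By concavity (ii), $\nabla f(\boldsymbol{x}_t)^{\top}(\boldsymbol{x}_t-\boldsymbol{x}^*)\le f(\boldsymbol{x}_t)-f(\boldsymbol{x}^*)$, and by condition 1, $\lVert\nabla f(\boldsymbol{x}_t)\rVert_2^2\le L^2$; rearranging the resulting inequality yields
\begin{align*}
f(\boldsymbol{x}^*)-f(\boldsymbol{x}_t)\le\frac{\lVert\boldsymbol{x}_t-\boldsymbol{x}^*\rVert_2^2-\lVert\boldsymbol{x}_{t+1}-\boldsymbol{x}^*\rVert_2^2}{2\eta}+\frac{\eta L^2}{2}.
\end{align*}

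Next I would sum this over $t=1,\ldots,T$. The first term on the right telescopes to $\bigl(\lVert\boldsymbol{x}_1-\boldsymbol{x}^*\rVert_2^2-\lVert\boldsymbol{x}_{T+1}-\boldsymbol{x}^*\rVert_2^2\bigr)/(2\eta)\le D^2/(2\eta)$ by condition 2, and the second contributes $T\eta L^2/2$. Dividing by $T$ and using Jensen's inequality for the concave $f$, namely $f\!\left(\tfrac1T\sum_{t=1}^T\boldsymbol{x}_t\right)\ge\tfrac1T\sum_{t=1}^T f(\boldsymbol{x}_t)$, gives
\begin{align*}
f(\boldsymbol{x}^*)-f\!\left(\frac1T\sum_{t=1}^T\boldsymbol{x}_t\right)\le\frac{D^2}{2\eta T}+\frac{\eta L^2}{2}.
\end{align*}
Substituting the prescribed step size $\eta=D/(L\sqrt{T})$ makes both terms on the right equal to $LD/(2\sqrt{T})$, so their sum is $LD/\sqrt{T}$, and rearranging gives exactly the claimed bound $f(\boldsymbol{x}^*)-LD/\sqrt{T}\le f\!\left(\tfrac1T\sum_{t=1}^T\boldsymbol{x}_t\right)$.

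The algebraic expansion, the telescoping, and the final substitution are all routine. The only genuinely non-trivial ingredient — and hence the "main obstacle" for a self-contained proof — is the non-expansiveness of the Euclidean projection onto a convex set, which follows from the obtuse-angle (variational) characterization $\bigl(\boldsymbol y-\Pi_{\boldsymbol C}(\boldsymbol y)\bigr)^{\top}\bigl(\boldsymbol z-\Pi_{\boldsymbol C}(\boldsymbol y)\bigr)\le 0$ for all $\boldsymbol z\in\boldsymbol C$; I would either cite this or derive it in one line from the optimality conditions of the projection. It is also worth emphasizing that because $\eta=D/(L\sqrt{T})$ depends on the horizon $T$, the guarantee attaches to the averaged iterate $\tfrac1T\sum_t\boldsymbol{x}_t$ rather than the last iterate — which is precisely why Algorithm \ref{Proj_Grad_Algo} evaluates the primal objective at $\tfrac1T\sum_t\boldsymbol\lambda^{(t)}$ and compares it against the best seen iterate.
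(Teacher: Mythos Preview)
Your proof is correct and follows exactly the standard projected-gradient analysis that the paper defers to: the paper does not actually prove this theorem but simply cites \cite{Bubeck15} (Theorem 3.2), and the argument you give --- non-expansiveness of the projection, the first-order concavity inequality, telescoping the squared distances, Jensen on the averaged iterate, and substituting $\eta=D/(L\sqrt{T})$ --- is precisely that proof. There is nothing to add or correct.
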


The above theorem essentially states that the average of all iterates can get arbitrarily close to the maximum as we increase $T$. 
%This implicitly means that the final iterate will also be arbitrarily close to the maximum but without further assumptions we cannot say what the gap will be in terms of the number iterations like the above guarantee for the average of all the iterates.\\
One can verify that our \ref{Dual_Relaxed_Primal} problem satisfies all the conditions of the above theorem with the following values of the parameters $D$ and $L$.
\begin{enumerate}
\item In the feasible set of the \ref{Dual_Relaxed_Primal} problem, the variables $\boldsymbol\mu$ get uniquely frozen once we freeze the values of $\boldsymbol\lambda$. Therefore, we can view the dual function $g(\boldsymbol\lambda, \boldsymbol\mu)$ as function of $\boldsymbol\lambda$ only, that is $g(\boldsymbol\lambda, \boldsymbol\mu)=g(\boldsymbol\mu(\boldsymbol\lambda))$. 
\item  According to Lemma \ref{lemma_dual_gradient}, for the $\ell^{th}$-coordinate of the gradient of dual function, we have $|\nabla g(\boldsymbol\mu(\boldsymbol\lambda))_{\ell} | = \sum_{j=1}^k \lVert \boldsymbol{x}_{\ell}^{\top} \boldsymbol{v}_j \rVert_2^2 \leq \sum_{j=1}^d \lVert \boldsymbol{x}_{\ell}^{\top} \boldsymbol{v}_j \rVert_2^2 = 1$ because all $\left\{\boldsymbol{v}_j\right\}_{j=1}^{d}$ form an orthogonal basis of $\mathbb{R}^d$ and the projection of any unit length data point $\boldsymbol{x}_{\ell}$ on these vectors will have unit length. This implies that $\lVert \nabla g(\boldsymbol\mu(\boldsymbol\lambda)) \rVert_2 \leq \sqrt{n}$ and thus $L = \sqrt{n}$ for our case.
\item The maximum value of $\| x- y\|_2$ for any $x,y \in C$ will be $\sqrt{2}$ as the farthest points on the probability simplex will be any of its two corners and the distance between them will be $\sqrt{2}$. Thus, $D = \sqrt{2}$ for our case.
\end{enumerate}
%We will just prove that the average of all the iterates converges to the dual optimum. 

Note that above convergence guarantee is for the average iterate. In Algorithm 1, we compare the primal objective function value at the average iterate with the value at the best iterate so far (Step 7 of Algorithm 1), and output the one for which the it is lower. Therefore, theoretical guarantees of the Theorem \ref{bubeck_thm} remains valid for the output of Algorithm 1.  
\section{Key Insights of our Algorithm}
In this section, we highlight some important insights regarding the problem and our approach.
\begin{enumerate}
\item The proposed \ref{Dual_Relaxed_Primal} problem has far less number of constraints compared to both the \ref{Primal} and \ref{Relaxed_Primal} problems. Contrary to this, the baseline formulation of \cite{Indyk13} is a SDP relaxation of the \ref{Primal} problem and because of which they have one constraint per data point. As mentioned in our experiments section, this fact is one of the main reasons that the SDP based algorithms proposed by \cite{Indyk13} do not scale as well as our algorithm with increasing number of data points.
\item The main computation in each iteration of our algorithm involve dual gradient computation and projecting dual solution onto a probability simplex. The gradient calculation requires top-$k$ eigenvectors of $M(\lambda)$ which can be computed quickly because $k \ll d$ in any dimensionality reduction problem. Further, we also use a fast algorithm for projection onto the probability simplex as described earlier. All these together make our scheme very fast.
%\item (I feel this point is of not much use to the main idea of our algorithm). As a side note, observe that the gradient of the dual objective can be 0 if and only if  all the $v_i$'s are orthogonal to all the data points. A necessary condition for this to happen is that the space spanned by the data points should have a dimension less than $d-k$
\end{enumerate}
\section{Experiments}
%%%%%%%%%%%%%%%%%%%%%%%%%%%%%%%%%%%%%%%%%%%%%%%%%%%%%%%%%%%%%%%%%%%%%%%%%%%%%%%
In this section, we present the results of our experiments wherein, we have compared the performance of our algorithm with four baseline algorithms. The first two baselines 
are {\em PCA} and {\em Random Projections} ({\em Random} for short). The other two baselines are both based on the algorithms
given in~\cite{Indyk13}. These are based on semidefinite programming relaxations of the \ref{Primal} problem and are named {\em SDP+RR} and {\em SDP+DR} depending on whether the rounding is randomized (RR) or deterministic (DR). 

All our experiments were performed using MATLAB on a machine having 8-Core Intel i7 processor and 64GB RAM. The details of the datasets used in our experiments and values of various hyper-parameters used by our algorithm are being summarized in Table~\ref{tab:data}. 
%\vspace{-0.5cm}     	
\begin{center}
\begin{table*}[!ht]
\centering
\begin{tabular}[!ht]{ccccc} 
\hline \hline
Dataset & Dimension ($d$) & Sample Size ($n$) & Iterations ($T$) & Step Size ($\eta$) \\
\hline\hline
MNIST (Digits $2, 4, 5, 7$) & $784$ & $1035$ & $120$ & $0.004$\\ 
\hline
MNIST (Digits $2,4,5,7$) & $784$ & $5050$ & $120$ & $0.0018$\\
\hline
MNIST (Digits $2,4,5,7$) & $784$ & $10011$ & $120$ & $0.00129$\\
\hline
MNIST (Digits $2,4,5,7$) & $784$ & $50086$ & $120$ & $0.00057$\\
\hline
MNIST (Digits $2,4,5,7$) & $784$ & $100128$ & $120$ & $0.0004$\\
\hline
20 Newsgroup&&&&\\ (Atheism and MS-Windows Misc.) & $8000$ & $1034$ & $120$ & $0.004$ \\
\hline
\hline
\end{tabular}
\caption{Details of the datasets and the hyper-parameters used in our experiments}	
\label{tab:data}
\end{table*}
\end{center}
%%%%%%%%%%%%%%%%%%%%%%%%%%%%%%%%%%%%%%%%%%%%%%%%%%%%%%%%%%%%%%%%%%%%%%%%%%%%%%%
The goal of our experiments is to compare the performance of our algorithm vis-\`a-vis the performance of the baseline algorithms in terms of (i) quality of the solution (as measured by the value of the \ref{Primal} objective function), and (ii) time taken by the respective algorithms. At this point, we would like to highlight that the time taken by any iterative algorithm depends on its stopping criterion. Given that our algorithm is also iterative in nature, we use the following stopping criterion - {\em We stop our algorithm whenever an appropriate pre decided number of iteration count is completed (specifically, $120$ in our case).} As far as learning rate $(\eta)$ is concerned for our algorithm, after fixing the iteration count, we choose $\eta$ for the given dataset size as suggested by the third bullet point in the statement of Theorem 2.

In what follows, we describe our experimental setup details, results,  and the corresponding insights for each of the dataset separately.
%\vspace{-0.5cm}
\begin{paragraph}
{\bf MNIST Dataset:} MNIST\cite{lecun-mnisthandwrittendigit-2010} dataset comprises a collection of images for handwritten digits in the range of $0-9$. Each image in the MNIST dataset is a $28 \times 28$ gray scale image matrix. For our experimental purpose,  we flattened each of these images into a $784$ dimensional vector.  
%%%%%%%%%%%%%%%%%%%%%%%%%%%%%%%%%%%%%%%%%%%%%%%%%%%%%%%%%%%%%%%%%%%%%%%%%%%%%%%
%\vspace{-0.4cm}
\begin{figure*} 
\begin{multicols}{3}
    \includegraphics[scale=0.22]{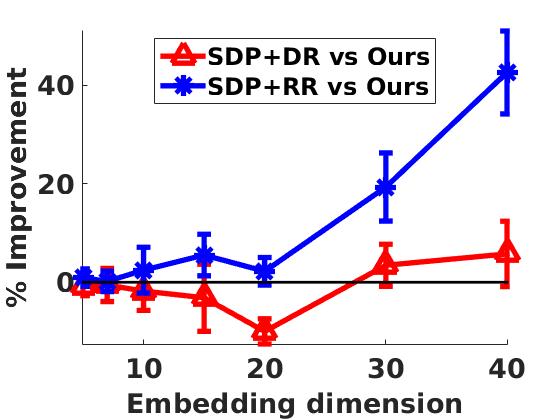}\par 
        \includegraphics[scale=0.22]{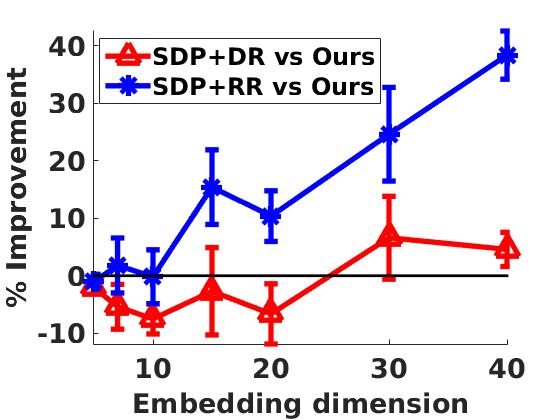}\par 
        \includegraphics[scale=0.22]{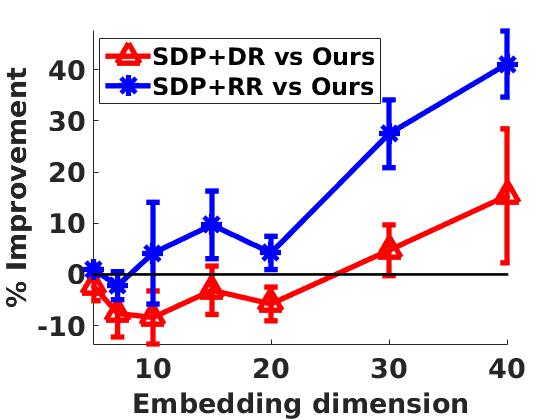}\par 
        \includegraphics[scale=0.22]{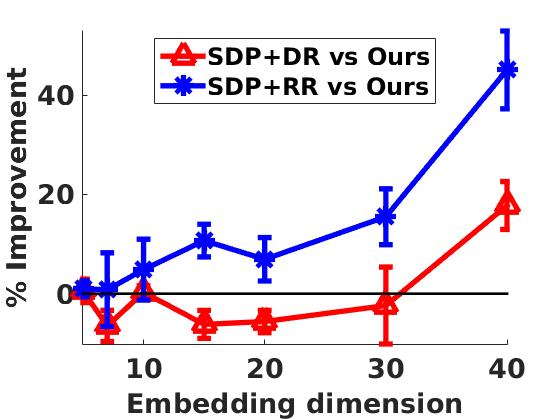}\par 
%    \end{multicols}
%\caption{caption here}
%\begin{multicols}{2}
    \includegraphics[scale=0.22]{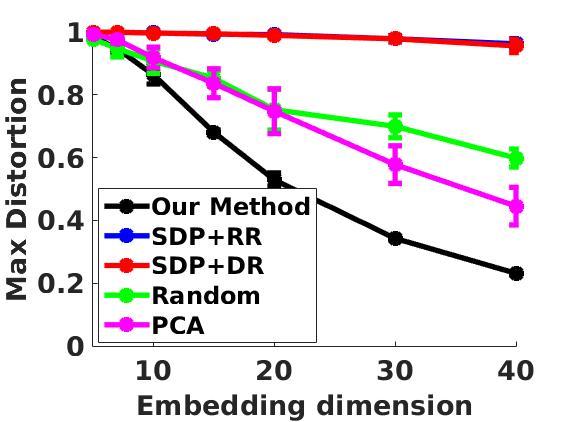}
        \includegraphics[scale=0.22]{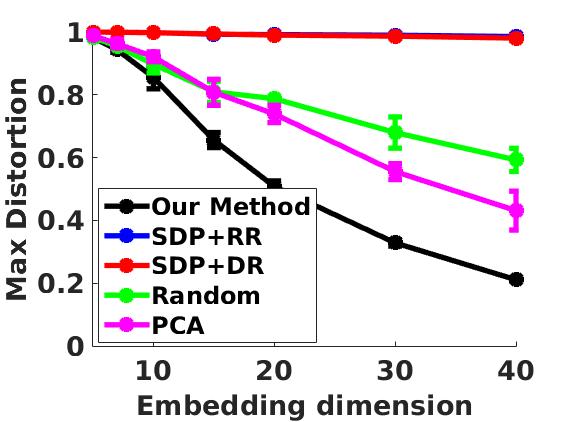}
\includegraphics[scale=0.22]{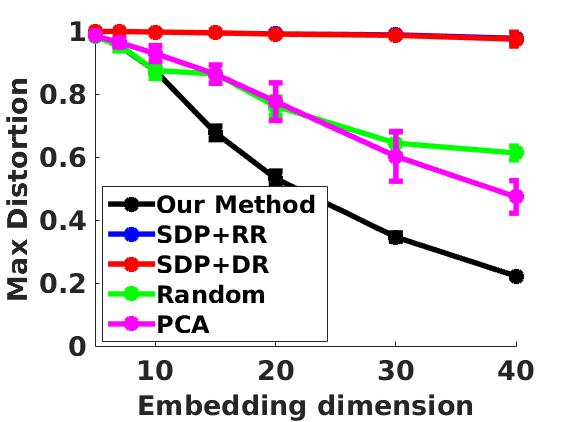}
        \includegraphics[scale=0.22]{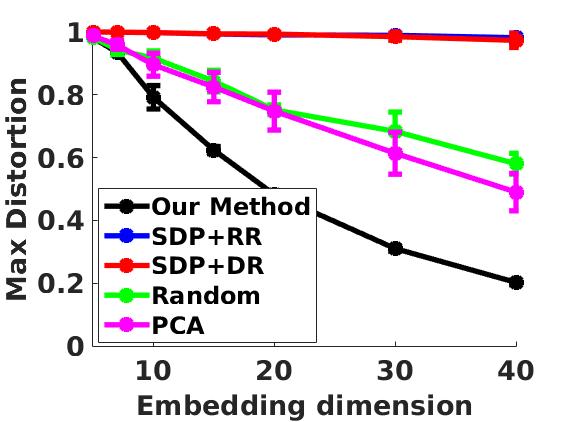}
        \includegraphics[scale=0.22]{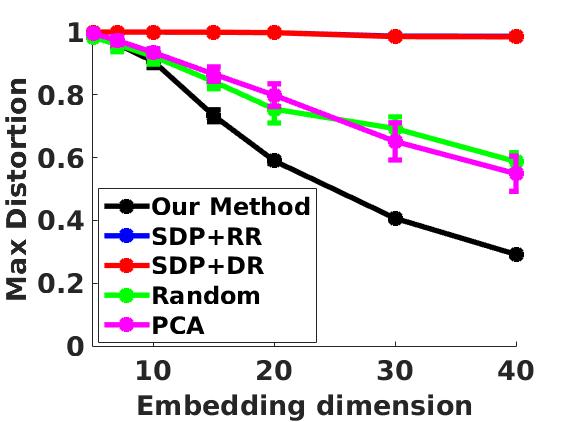}
        \includegraphics[scale=0.22]{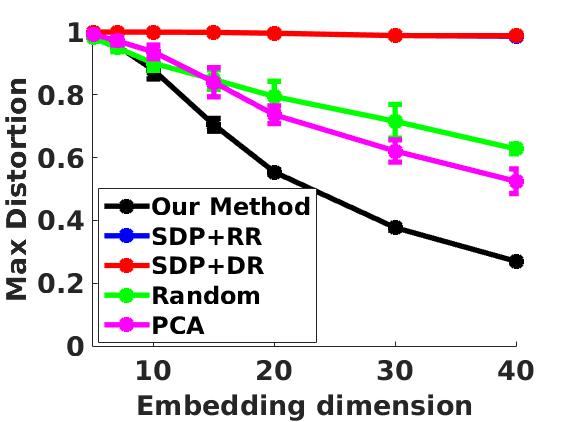}
        \includegraphics[scale=0.22]{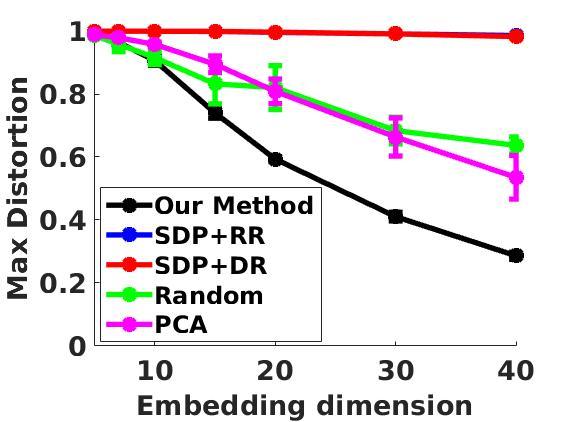}
 \includegraphics[scale=0.22]{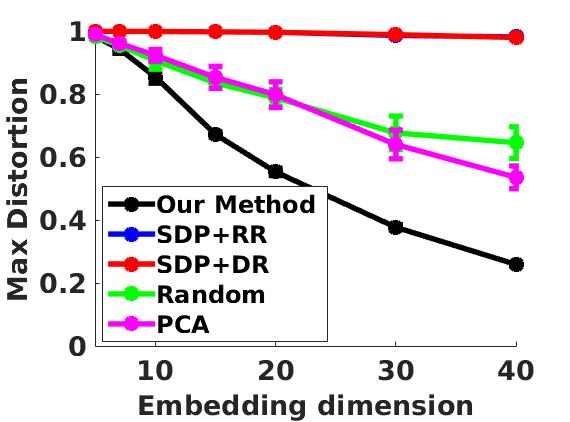}
\end{multicols}
\caption{\label{MNIST_Plots1} Performance of the proposed algorithm relative to baseline methods on MNIST dataset. Columns 1, 2, and 3 correspond to dataset size of $1K$ $5K$, and $10K$, respectively. The rows from top to bottom correspond to MNIST digit $2, 4, 5$, and $7$, respectively.}
\end{figure*}
\begin{figure*} 
\begin{multicols}{3}
    \includegraphics[scale=0.22]{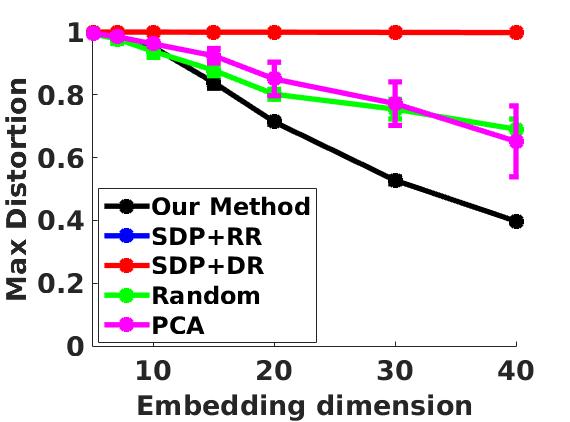}\par 
        \includegraphics[scale=0.22]{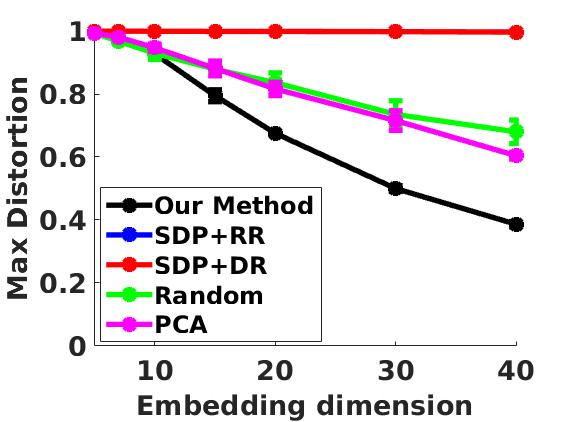}\par 
        \includegraphics[scale=0.22]{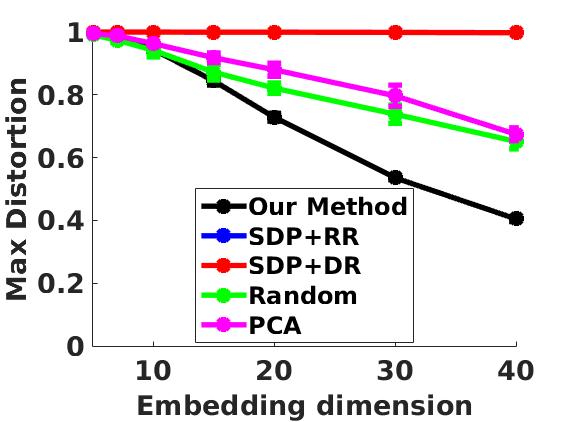}\par 
        \includegraphics[scale=0.22]{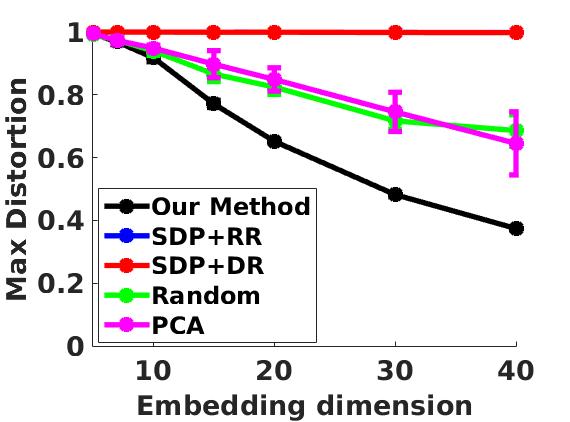}\par 
%    \end{multicols}
%\caption{caption here}
%\begin{multicols}{2}
    \includegraphics[scale=0.22]{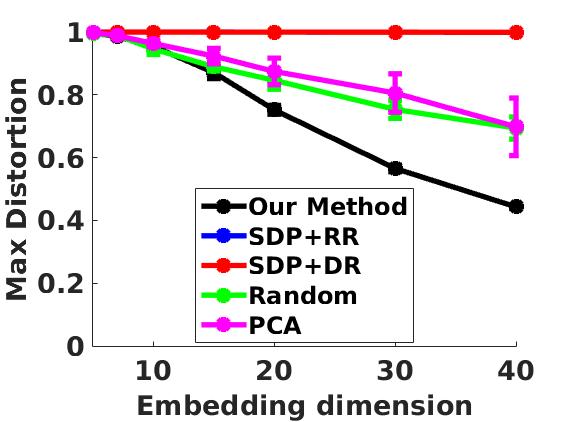}
        \includegraphics[scale=0.22]{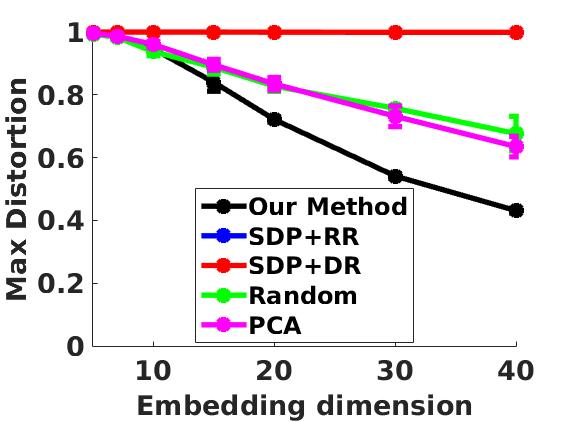}
\includegraphics[scale=0.22]{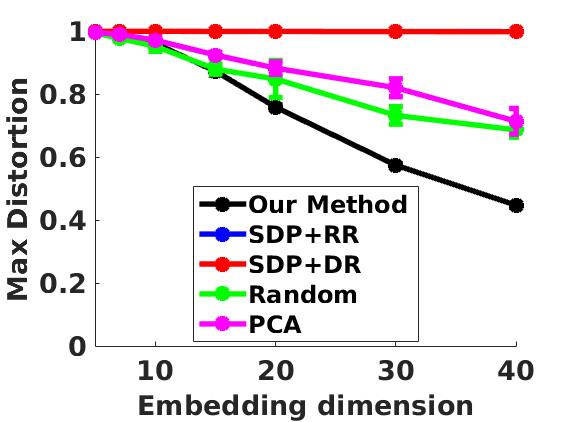}
\vspace{4cm}             \includegraphics[scale=0.22]{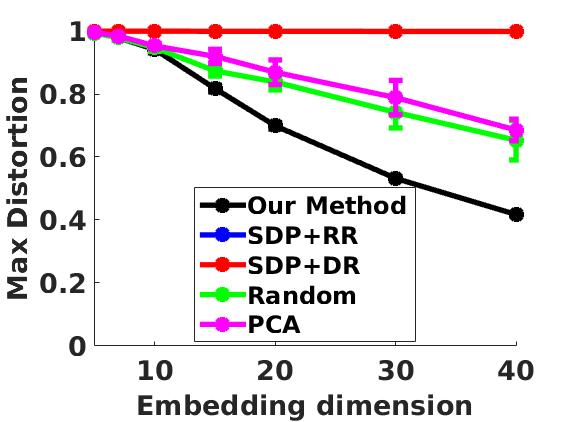}
 \includegraphics[scale=0.22]{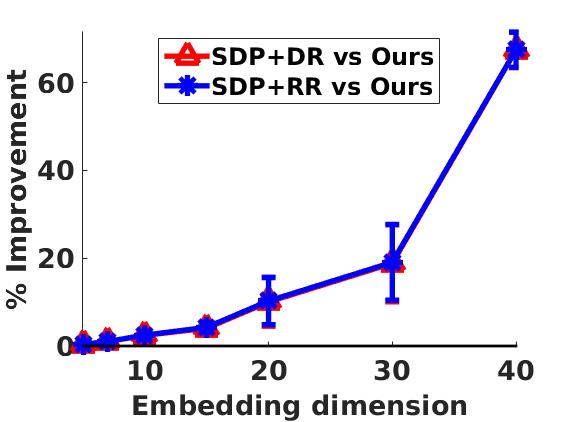}
\includegraphics[scale=0.22]{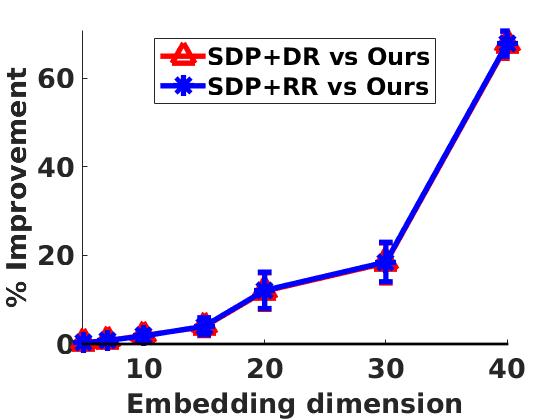}

\end{multicols}
\caption{\label{MNIST_Plots2} Performance of the proposed algorithm relative to baseline methods on MNIST and 20 Newsgroup datasets. Columns 1 and 2 correspond to MNIST dataset size of $50K$ and $100K$, respectively. The rows from top to bottom correspond to MNIST digit $2, 4, 5$, and $7$, respectively. The last column corresponds to the 20 Newsgroup dataset of size $1K$. Rows 1 and 2 of the last column correspond to categories Atheism and MS-Windows Misc, respectively.}
\end{figure*}
%\end{comment}

\begin{figure*} 
\begin{multicols}{2}
    \includegraphics[scale=0.3]{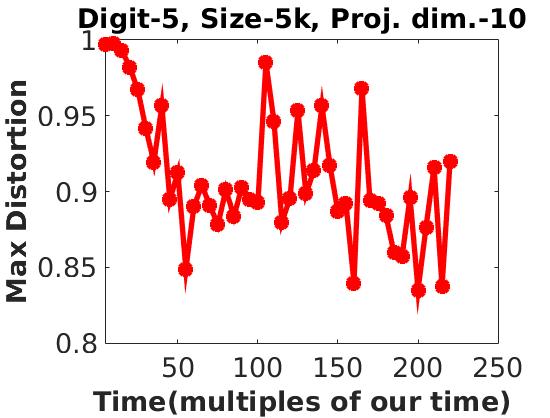}\par 
    \includegraphics[scale=0.3]{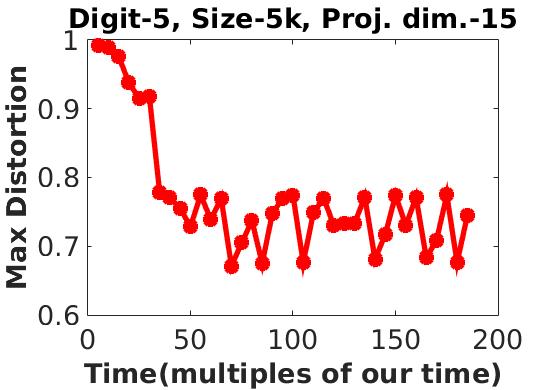}\par 
    \includegraphics[scale=0.3]{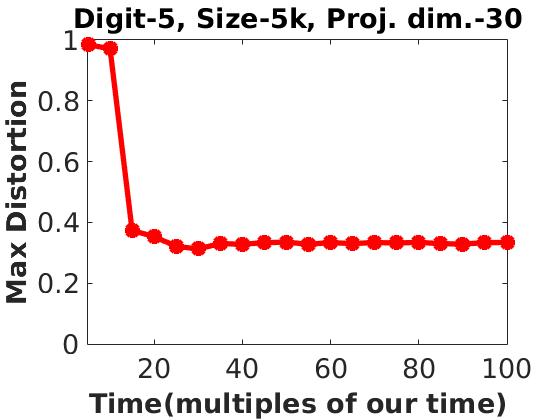}\par

\end{multicols}
\caption{\label{Indyk_Plots} Max distortion versus running time for SDP+DR method on MNIST digit 5 having dataset size of $5K$ and projection dimensions of $10, 15$ and $30$. Here running time is in multiples of the time taken by our algorithm.}
\end{figure*}

%%%%%%%%%%%%%%%%%%%%%%%%%%%%%%%%%%%%%%%%%%%%%%%%%%%%%%%%%%%%%%%%%%%%%%%%%%%%%%%
\begin{figure*} 
\begin{multicols}{2}
    \includegraphics[scale=0.3]{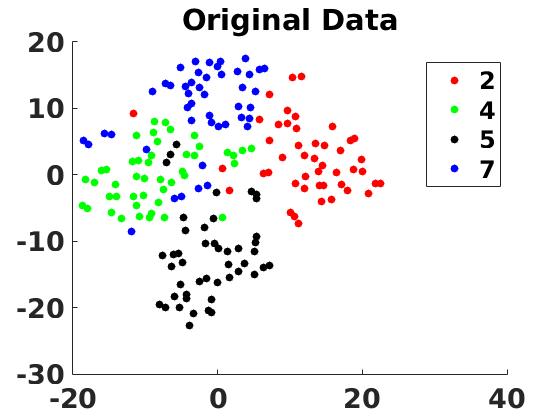}\par 
        \includegraphics[scale=0.3]{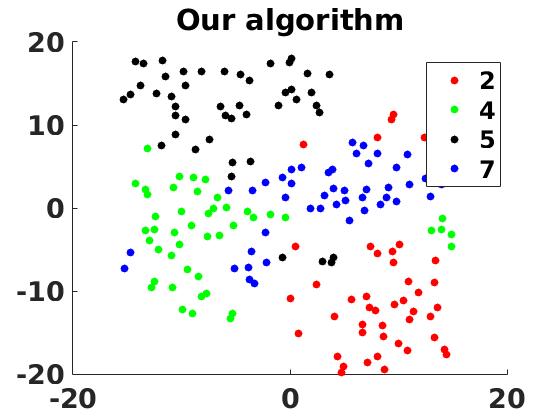}\par 
        \includegraphics[scale=0.3]{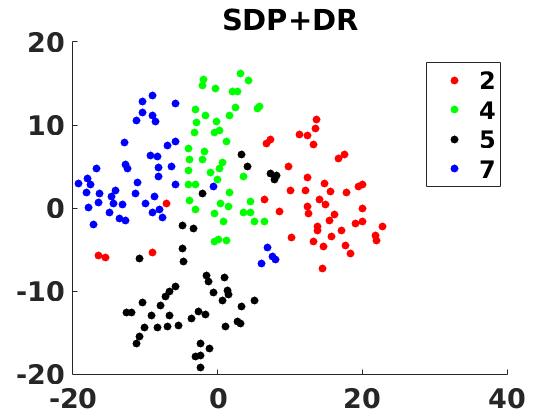}\par 
        \includegraphics[scale=0.3]{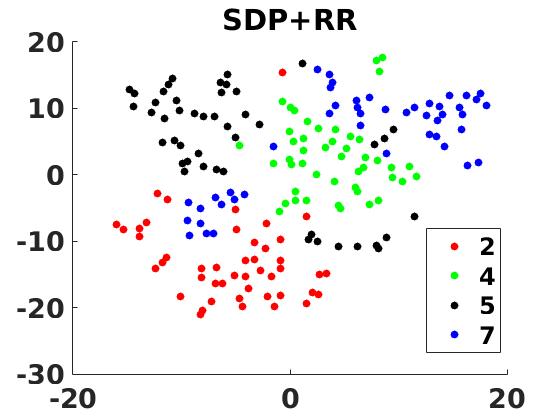}\par 
 
\includegraphics[scale=0.3]{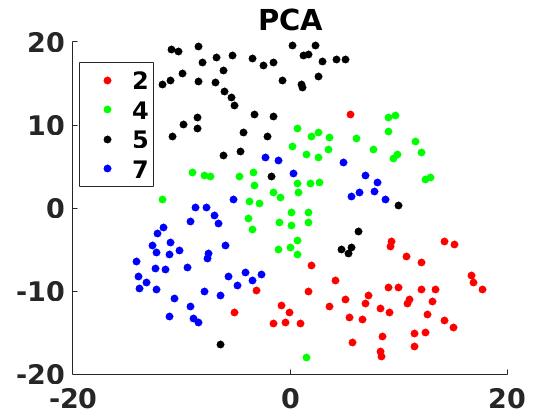}\par 
\end{multicols}
\caption{\label{t-SNE_Plots} t-SNE visualizations of the MNIST data in original space and 40 dimensional projected space using our algorithm as well as baseline methods.}
\end{figure*}

For MNIST dataset, we experimented with the images of the following digits - $2$, $4$, $5$, and $7$. For each of these four digits, we first randomly sampled certain number of images (that is, vectors of dimension $748$ each) from the dataset and then computed the pairwise normalized differences of these sampled images. This way, for each of these four digits, we obtained a collection of unit length vectors in $784$ dimensional space. Such a collection of these pairwise normalized images for each digit served as the input dataset for our experiments. 

In our experiments, for each of these four digits, we generated five datasets of different sizes - $1K, 5K, 10K, 50K$, and $100K$. Further, for each such dataset, we created $5$ variants by changing the seed of randomization used in picking the images. For each of these variants, we run our algorithm as well as baseline algorithms. The results of these experiments are summarized in the form of Figures \ref{MNIST_Plots1} and \ref{MNIST_Plots2}.

Observe that Figure \ref{MNIST_Plots1} consists of three columns -- each comprising four different plots. Each of these columns correspond the results for $1K, 5K$, and $10K$ sizes, respectively. The results for the sizes of $50K$, and $100K$ are being shown in columns 1 and 2, respectively, of Figure \ref{MNIST_Plots2}. The rows from top to bottom in each of the Figure \ref{MNIST_Plots1} and Figure \ref{MNIST_Plots2} (excluding the last column) correspond to the digits $2$, $4$, $5$, and $7$, respectively.   

Note that for each plot in Figure \ref{MNIST_Plots1} and Figure \ref{MNIST_Plots2} (excluding the last column), $x$-axis corresponds to the embedding dimension $(k)$ of the datasets which we have varied as $5, 7, 10, 15, 20, 30$, and $40$. The $y$-axis in each of these plots correspond to the quality of the solution (i.e., maximum distortion) offered by our algorithm as well as baseline algorithms. Only for $1K$ size in the first column of Figure \ref{MNIST_Plots1}, $y$-axis corresponds to the \%-improvement of our algorithm relative to  SDP+RR and SDP+DR. By \% improvement, we mean $100 \times \left(\text{\LARGE{$\epsilon$}}_{baseline} - \text{\LARGE{$\epsilon$}}_{ALG}\right)/\text{\LARGE{$\epsilon$}}_{baseline}$, where $\text{\LARGE{$\epsilon$}}_{baseline}$ is the maximum distortion offered by the baseline algorithm and $\text{\LARGE{$\epsilon$}}_{ALG}$ is the same quantity for our proposed algorithm. Note, higher the ratio means better is the proposed algorithm relative to baseline. The negative value of this ratio means our algorithm offers an inferior solution than the baseline algorithm. The reason for plotting \% improvement for the $1K$ size {\em but absolute score for all other sizes} is to have a better visual representation of the performance of our proposed algorithm relative to the baselines. For smaller dataset of size $1K$, we found that it becomes hard to visually compare the performance of our algorithm relative to the baselines if we opt for absolute score and this was the motivation for us to go for \% improvement in this case.

We would also like to highlight that each of the graph in these plots were plotted by taking average of $5$ variants of the same dataset size. The corresponding standard deviation bars are also being shown for each point in these plots.   

As far as time taken by our algorithm and baselines are concerned, recall that {Random} and {PCA} are not iterative algorithms and hence we always run them until we get their answer. On the other hand, our proposed algorithm and other two baselines - SDP+RR and SDP+DR are iterative in nature and hence it is important for us to fix an appropriate clock time restriction so as to make a fair comparison. In our experiments, we have put no time restriction on SDP+RR and SDP+DR methods in the case of $1K$ sized datasets and we allowed these baseline algorithms to run till their convergence. However, when we run our experiments on $5K$ and $10K$ sized datasets, we allowed these two baseline algorithms to have $3X$ more time than the time taken by our algorithm.  Similarly, for the case of $50K$ and $100K$ sized datasets, we offered the same time to the SDP+RR and SDP+DR methods as our algorithm. We selected some of these time restrictions  for the SDP+RR and SDP+DR methods because we observed that running these baselines until their stopping criteria was practically infeasible for the larger datasets. This observation itself underscores the scalability of our algorithm compared to these baselines. We have highlighted this observation in Section 4 already. 

It is evident from the plots in the Figures \ref{MNIST_Plots1} and \ref{MNIST_Plots2} that as far as MNIST dataset is concerned, our algorithm performs better than all the baseline methods almost all the times. Note that, we have not compared our results with {Random} and {PCA} baselines for the case of $1K$ dataset because both our algorithm as well as SDP+RR/SDP+RR perform far more superior than {Random} and {PCA} and hence we decide to compare our algorithm only with SDP+RR and SDP+DR for this specific case.

Finally, to make a compelling case in favor of our algorithm, we also conducted an additional experiment where simply run SDP+DR algorithm on $5K$ dataset until its convergence so as to ensure that its convergence time is significantly higher than the time offered by us to them during our experiments. Figure \ref{Indyk_Plots} depicts the results of this experiment. We performed this experiment only for the digit 5 and the projection dimensions of 10, 15, and 30.
%we have Note that Figure 3 shows the maximum distortion achieved by SDP+DR algorithm versus time taken (time is in integral multiples of time taken by our algorithm on the same data) until convergence, on digit-5 5K dataset for 3 different projection dimensions. 
From these plots, it is clear that SDP+DR algorithm (SDP+RR is anyways slower than SDP+DR) indeed takes significantly more times to converge compared to the time offered by us during our experiments. This rules out the possibility of the baseline algorithms converging quickly by giving any additional little time than what is offered by us in our experiments.
% Fig.1 column 3 depicts the \% improvement offered by our algorithm, averaged over the 5 instances of that digit and dataset relative to baseline algorithms as a function of embedding dimension. Corresponding standard deviation bars are also shown for each point on the plots. Here by \% improvement we mean $100 \times \left(\text{\LARGE{$\epsilon$}}_{baseline} - \text{\LARGE{$\epsilon$}}_{ALG}\right)/\text{\LARGE{$\epsilon$}}_{baseline}$, where $\text{\LARGE{$\epsilon$}}_{baseline}$ is the maximum distortion offered by the baseline algorithm and $\text{\LARGE{$\epsilon$}}_{ALG}$ is the same quantity for our proposed algorithm. Relative improvement only shown for 1k sized datasets as the absolute improvement can be clearly seen for bigger datasets. Note, higher the ratio means better is the proposed algorithm relative to baseline. The negative value of this ratio means our algorithm offers an inferior solution than the baseline algorithm. 
% Fig.1 column 1-2 and Fig.2 column 1-2 show the maximum distortion (averaged over the 5 instances of a particular digit) (equivalently, \ref{Primal} objective function value) offered by our algorithm as a function of embedding dimension. Corresponding standard deviation bars are also shown for each point on the plots. 
\end{paragraph}

%\vspace{-0.5cm}
\begin{paragraph}
{\bf 20 Newsgroup Dataset :} We repeated the same experimental setup for a different dataset, namely 20 Newsgroup. This dataset has much larger feature dimension (i.e. $8000$) as compared to MNIST. For this dataset, we picked two categories - {\em Atheism, MS-Windows Misc}. Our dataset size for each category was $1K$. This time, we offered $3X$ time to the SDR+DR and SDP+RR as compared to the time taken by our algorithm. In a manner similar to the plots for $1K$ sized MNIST datasets given in Figure \ref{MNIST_Plots1} (column 1), the plots in Figure \ref{MNIST_Plots2} (column 3) shows the percentage improvement of our algorithm compared to the baseline algorithms for the case of 20 Newsgroup dataset. We can observe from this experiment that our algorithm does not slow down with increasing the data dimension $d$. This is because we only need top-$k$ eigenvectors of the matrix $\textbf{M}\in \mathbb{R}^{d\times d}$ in each iteration.
\end{paragraph}

\begin{paragraph}
{\bf t-SNE plots :} 
Finally, to make a visual comparison between the quality of the embedding obtained by our algorithm and the baselines, we used t-SNE visualization algorithm. Figure \ref{t-SNE_Plots} depicts such visualization where we first visualize the MNIST data in the original $784$ dimensional space and then visualize its embedding in 40 dimensional space obtained via our algorithm,  SDP+DR, SDP+RR and PCA. We do so just for qualitative comparison purpose. From Figure \ref{t-SNE_Plots}, it appears that SDP+RR and SDP+DR distort the point clouds  more as compared to our algorithm.
\end{paragraph}

%%%%%%%%%%%%%%%%%%%%%%%%%%%%%%%%%%%%%%%%%%%%%%%%%%%%%%%%%%%%%%%%%%%%%%%%%%%%%%%
%\newpage
%\onecolumn

\section{Conclusion}
In this paper, we have presented a novel Lagrange duality based method to construct near isometric orthonormal linear embeddings. Our proposed algorithm reduces the dimension of the data while achieving lower distortion than the state-of-the-art baseline methods and is also computationally efficient. We have also given theoretical guarantees for the approximation quality offered by our algorithm. Our bound suggests that for certain input datasets, our algorithm offers near optimal solution of the problem. Our proposed theoretical guarantee depends on the spectral properties of the input data and hence the key question that we leave open is to obtain a data independent bound for the same. 
\section{Acknowledgement}
A part of this work for Dinesh Garg and Anirban Dasgupta was supported by the SERB-DST Early Career Research Grant No. ECR/2016/002035. The authors declare that they have no conflict of interest. 

Finally, the authors sincerely want to thank Dr. Chinmay Hegde of Iowa State University for sharing his code of SDP+RR and SDP+DR algorithms.
\begin{comment}

\label{intro}
Your text comes here. Separate text sections with
\section{Section title}
\label{sec:1}
Text with citations \cite{RefB} and \cite{RefJ}.
\subsection{Subsection title}
\label{sec:2}
as required. Don't forget to give each section
and subsection a unique label (see Sect.~\ref{sec:1}).
\paragraph{Paragraph headings} Use paragraph headings as needed.
\begin{equation}
a^2+b^2=c^2
\end{equation}

% For one-column wide figures use
\begin{figure}
% Use the relevant command to insert your figure file.
% For example, with the graphicx package use
  \includegraphics{example.eps}
% figure caption is below the figure
\caption{Please write your figure caption here}
\label{fig:1}       % Give a unique label
\end{figure}
%
% For two-column wide figures use
\begin{figure*}
% Use the relevant command to insert your figure file.
% For example, with the graphicx package use
  \includegraphics[width=0.75\textwidth]{example.eps}
% figure caption is below the figure
\caption{Please write your figure caption here}
\label{fig:2}       % Give a unique label
\end{figure*}
%
% For tables use
\begin{table}
% table caption is above the table
\caption{Please write your table caption here}
\label{tab:1}       % Give a unique label
% For LaTeX tables use
\begin{tabular}{lll}
\hline\noalign{\smallskip}
first & second & third  \\
\noalign{\smallskip}\hline\noalign{\smallskip}
number & number & number \\
number & number & number \\
\noalign{\smallskip}\hline
\end{tabular}
\end{table}
\end{comment}

%\begin{acknowledgements}
%If you'd like to thank anyone, place your comments here
%and remove the percent signs.
%\end{acknowledgements}

% BibTeX users please use one of
\bibliographystyle{spbasic}      % basic style, author-year citations
\bibliography{refs}   % name your BibTeX data base

% Non-BibTeX users please use
\begin{comment}

\end{comment}
\end{document}